\newcommand{\m}[1]{\mathbf{#1}}
\newtheorem{theorem}{Theorem}
\newtheorem{lemma}{Lemma}
\newcommand{\prox}{\mathrm{Prox}}
\newcommand{\x}{\times}
\title{Few Shot Network Compression via Cross Distillation}
\author{Haoli Bai,\textsuperscript{\rm 1} Jiaxiang Wu,\textsuperscript{\rm 2}, Irwin King,\textsuperscript{\rm 1} Michael Lyu\textsuperscript{\rm 1}\\   
\textsuperscript{\rm 1}The Chinese University of Hong Kong \ \ \textsuperscript{\rm 2}Tencent AI Lab\\
\{hlbai, king, lyu\}@cse.cuhk.edu.hk, jonathanwu@tencent.com 
}
\begin{document}

\maketitle

\begin{abstract}
Model compression has been widely adopted to obtain light-weighted deep neural networks. Most prevalent methods, however, require fine-tuning with sufficient training data to ensure accuracy, which could be challenged by privacy and security issues. 
As a compromise between privacy and performance, in this paper we investigate few shot network compression: given few samples per class, how can we effectively compress the network with negligible performance drop?
The core challenge of few shot network compression lies in high estimation errors from the original network during inference, since the compressed network can easily over-fits on the few training instances. The estimation errors could propagate and accumulate layer-wisely and finally deteriorate the network output.
To address the problem, we propose cross distillation, a novel layer-wise knowledge distillation approach. By interweaving hidden layers of teacher and student network, layer-wisely accumulated estimation errors can be effectively reduced.
The proposed method offers a general framework compatible with prevalent network compression techniques such as pruning.
Extensive experiments on benchmark datasets demonstrate that cross distillation can significantly improve the student network's accuracy when only a few training instances are available.
\end{abstract}

\section{Introduction}

Deep neural networks~(DNNs) have achieved remarkable success in a wide range of applications, however, they suffer from substantial computation and energy cost. 
In order to obtain light-weighted DNNs, network compression techniques have been widely developed in recent years, including  network pruning~\cite{he2017channelprunning,luo2017thinet,wen2019structured}, quantization~\cite{han2016deepcompression,wu2016quantized,wu2018pocketflow,li2020rtn} and knowledge distillation~\cite{hinton2015distilling,romero2014fitnets}.

Despite the success of previous efforts, a majority of them rely on the whole training data to reboot the compressed models,
which could suffer from security and privacy issues. 
For instance, to provide a general service of network compression, the reliance on the training data may result in data leakage for customers.

To take care of security issues in network compression, some recent works~\cite{chen2019data,bhardwaj2019dream,lopes2017data} motivate from knowledge distillation~\cite{hinton2015distilling,romero2014fitnets}, and propose data-free fine-tuning by constructing pseudo inputs from the pre-trained teacher network. However, these methods highly rely on the quality of the pseudo inputs and are therefore limited to small-scale problems.

In order to obtain scalable network compression algorithms, a compromise between privacy and performance is to compress the network with few shot training instances, e.g., 1-shot for one training instances per class. 
Prevalent works~\cite{li2019knowledge,chen2019deep} along this line extend knowledge distillation by minimizing layer-wise estimation errors (e.g., Euclidean distances) between the teacher and student network. The success of these approaches largely comes from the layer-wise supervision from the teacher network.
Nevertheless, a key challenge in few shot network compression is rarely investigated in previous efforts: as there are few shot training samples available, the student network tend to over-fit on the training set and consequently suffer from high estimation errors from the teacher network during inference. Moreover, the estimation errors could propagate and accumulate layer-wisely~\cite{dong2017learning} and finally deteriorate the student network.

To deal with the above challenge, we proceed along with few shot network compression and propose cross distillation, a novel layer-wise knowledge distillation approach. Cross distillation can effectively reduce the layer-wisely accumulated errors in the few shot setting, leading to a more powerful and generalizable student network.
Specifically, to correct the errors accumulated in previous layers of the student network, we direct the teacher's hidden layers to the student network, which is called correction.
Meanwhile, to make the teacher aware of the errors accumulated on the student network, we reverse the strategy by directing the student's hidden layers to the teacher network. With error-aware supervision from the teacher, the student can better mimic the teacher's behavior, which is called imitation.
The correction and imitation compensate each other, and to find a proper trade-off, we propose to take convex combinations between either loss functions of the two procedures, or hidden layers of the two networks. 
To better understand the proposed method, we also give some theoretical analysis on how convex combination of the two loss functions manipulates the layer-wisely propagated errors, and why cross distillation is capable of improving the student network.
Our proposed method provides a universal framework to assist prevalent network compression techniques such as pruning~\cite{he2017channelprunning}.

Extensive experiments and ablation studies are conducted on popular network architectures and benchmark datasets, and the results demonstrate that our proposed method can effectively reduce the estimation errors and improve the compressed model in the few shot setting, outperforming a number of competitive baselines.

\section{Related Work}

While most previous efforts on network compression rely on abundant training data for fine-tuning the compressed network, there is a recent trend on investigating security and privacy issues for network compression.
These methods can be generally categorized into data-free methods and few-shot methods.

To perform data-free network compression, a simple way is to directly apply quantization~\cite{banner2018scalable} or low-rank factorization~\cite{zhang2015accelerating,ye2018learning} on network parameters, which usually degrade the network significantly when the compression rate is high.
Recent efforts motivate from knowledge distillation~\cite{hinton2015distilling,romero2014fitnets}, which constructs pseudo inputs from the pre-trained teacher network based on its parameters~\cite{nayak2019zero}, feature map statistics~\cite{lopes2017data,bhardwaj2019dream}, or an independently trained generative model~\cite{chen2019data} to simulate the distribution of the original training set. However, the generation of high-quality pseudo inputs could be challenging and expensive, especially on large-scale problems.

The other line of research considers network compression with few-shot training samples, which is a compromise between privacy and performance.
To fully take advantage of the training data, a number of existing works~\cite{he2017channelprunning,luo2017thinet,li2019knowledge,chen2019deep} extend knowledge distillation by layer-wisely minimizing the Euclidean distances between the teacher network and the student network. 
The layer-wise training is usually data-efficient as the student network receives layer-wise supervision from the teacher and there are fewer parameters to optimize comparing to back-propagation training of the entire student network~\cite{romero2014fitnets}.
Aside from layer-wise regression, recently data from different but related domains are also utilized as auxiliary information to assist the pruning on the target domain~\cite{chen2019cooperative}.
Unlike data free compression techniques, few shot network compression can significantly improve the performance of the compressed network with only limited training instances, which is potentially helpful for large-scale real-world problems.

Our proposed cross distillation proceeds along the line of few shot network compression. As an extension of previous layer-wise regression methods, we pay extra attention to the reduction of estimation errors during inference, which are usually large as a result of over-fitting on few shot training instances.
We remark that similar ideas of cross connection between two networks are also previously explored in multi-task learning~\cite{gupta2016cross} to obtain mutual representations from different tasks. Our work differs in both the problem setting as well as the optimization method to obtain a compact and powerful compressed network.

\section{Methods}


Our goal is to obtain a compact student network $\mathcal{F}^S$ from the over-parameterized teacher network $\mathcal{F}^T$.
Given few shot training instances $\{\m x_n, \m y_n\}_{n=1}^{N}$,
we denote their corresponding $l$-th convolutional feature map of the teacher network $\mathcal{F}^T$ as $\m h_{l}^T = \sigma(\m W^T_l * \m h^T_{l-1}) \in R^{N \times c_i \x k \x k}$ , where $\sigma(\cdot)$ is the activation function, $*$ is the convolutional operation, $\m W^T_l \in R^{c_o \x c_i \x k \x k}$ is the 4-D convolutional kernel, and $N$, $c_i, c_o$ and $k$ are the number of training size, input channels, output channels and the kernel size respectively.
Batch normalization layers are omitted as they can be readily fused into convolutional. 
Similar notations hold for $\mathcal{F}^S_l$. In the following, we drop the layer index $l$.

Unlike standard knowledge distillation approaches,
here we adopt layer-wise knowledge distillation which can take layer-wise supervision from the teacher network.
As is shown in Figure~\ref{fig:regu_connect}, with previous layers being fixed, layer-wise distillation aims to find the optimal $\m W^S_*$  that minimizes the Euclidean distance between $\m h^T$ and $\m h^S$, i.e., 
\begin{equation}
	\label{eq:sparse_regression}
	\m W^{S}_* = \arg\min_{\m W^S} \frac{1}{N} \mathcal{L}^r(\m W^S) + \lambda \mathcal{R}(\m W^S),
\end{equation}
where $\mathcal{L}^r(\m W^S)= \| \sigma(\m W^T * \m h^T) - \sigma(\m W^S * \m h^S) \|_F^2$ is the called \textit{estimation error}, and $\mathcal{R}(\m W^S)$ is some regularization tuned by $\lambda$. 
Despite that one can obtain a decent compact network by Equation~\ref{eq:sparse_regression} with abundant training data~\cite{he2017channelprunning,luo2017thinet}, when there are only few shot training instances, the student network $\mathcal{F}^S$ tends to suffer from high estimation errors on the test set as a result of over-fitting.
Moreover, the errors propagate and enlarge layer-wisely~\cite{dong2017learning}, and finally lead to a large performance drop on $\mathcal{F}^S$.





\subsection{Cross Distillation}

\begin{figure*}[t]
	\centering
	\label{fig:4kdloss}
	\subfigure[Layer-wise distillation] { 
		\label{fig:regu_connect}     
		\includegraphics[width=0.20\textwidth]{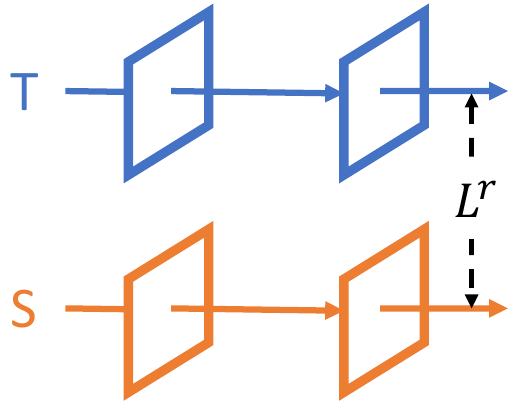}     
	} 
	\subfigure[Correction] { 
		\label{fig:corr_connect}     
		\includegraphics[width=0.24\textwidth]{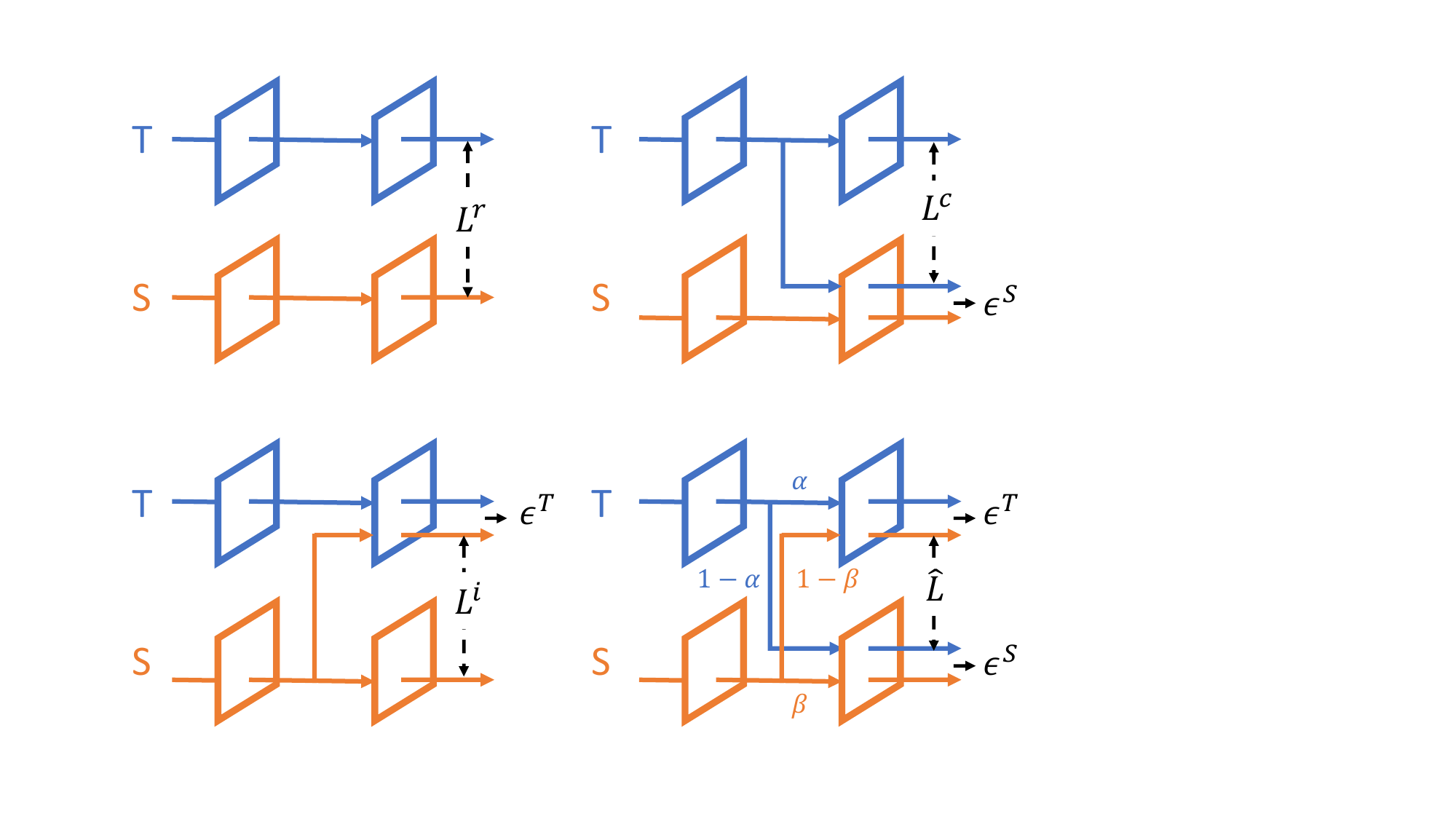}     
	}
	\subfigure[Imitation]{
		\label{fig:imi_connect}     
		\includegraphics[width=0.24\textwidth]{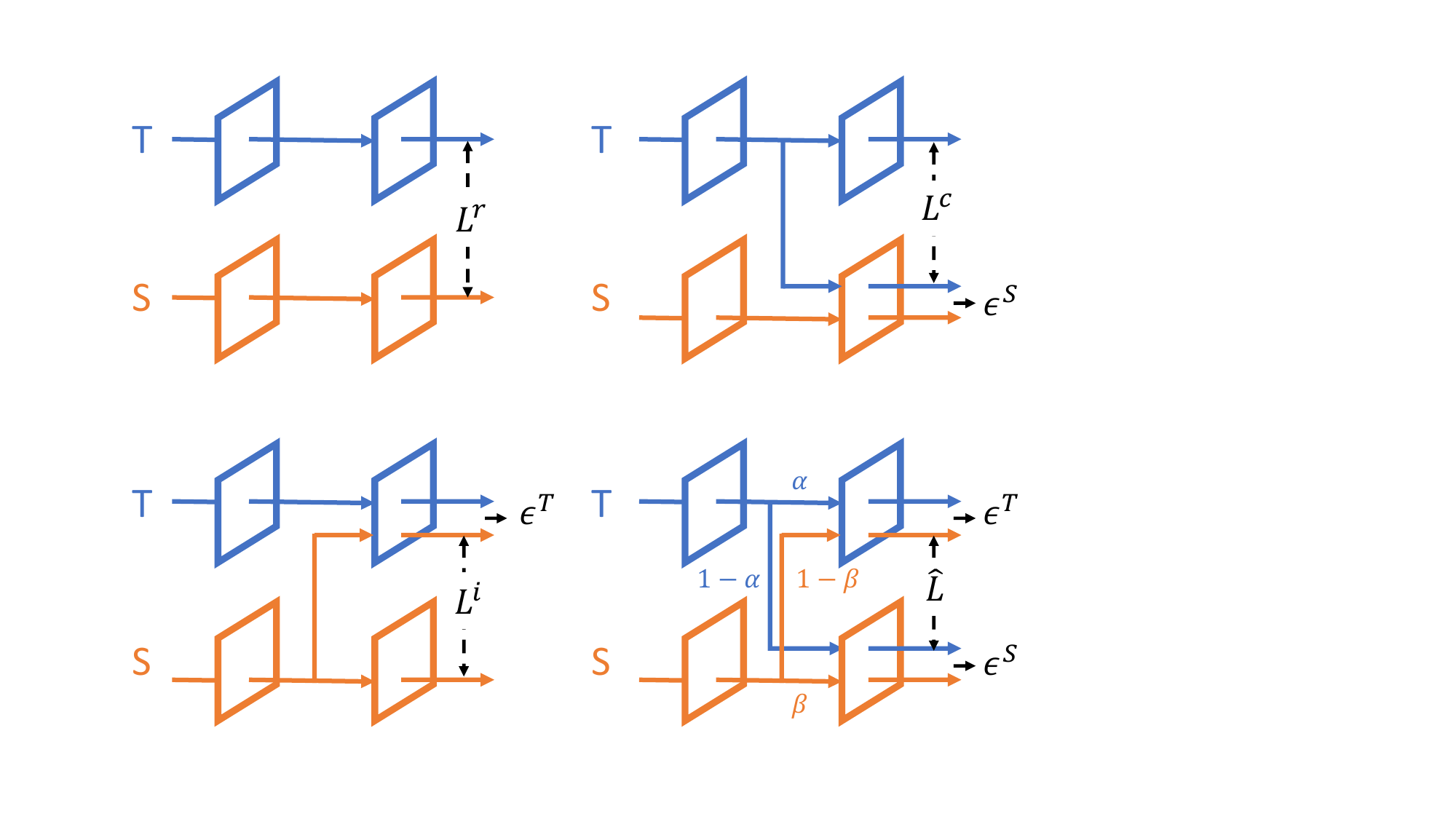}  
	}
	\subfigure[Soft cross distillation]{
		\label{fig:soft_connect}
		\includegraphics[width=0.24\textwidth]{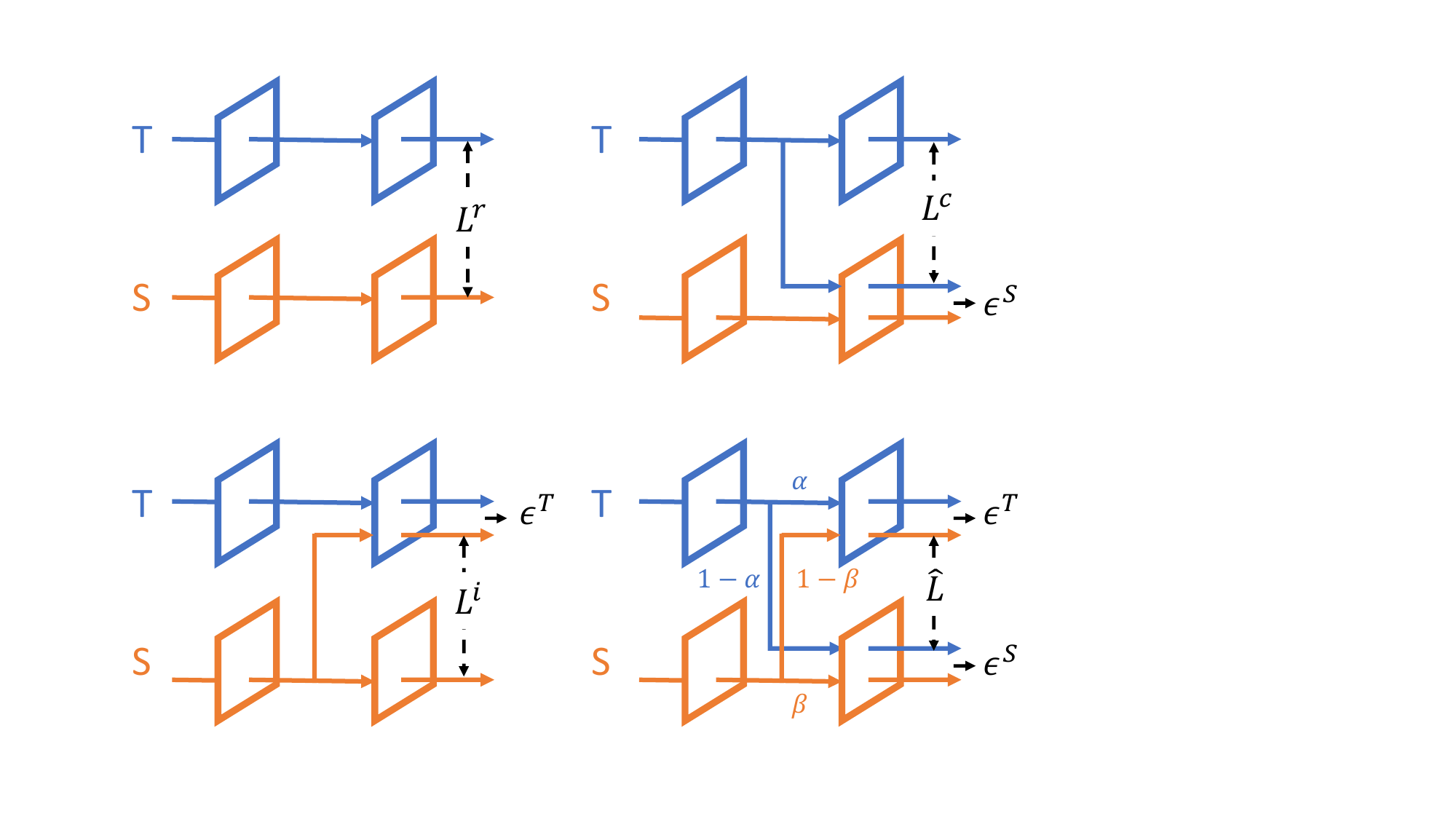}  
	}
	\caption{The four categories of layerwise distillation. a) is the traditional pattern; b) guides the teacher to student in order to compensate estimation errors on the student; c) guides the student to the teacher to make it aware of the errors on the student;
		d) offers a soft connection to balance b) and c) .}
\end{figure*}

To address the above issue, we propose cross distillation, a novel layer-wise distillation method targeting at few shot network compression.
Since the estimation errors are accumulated on the student network $\mathcal{F}^S$ and $\m h^T$ are taken as the target during layer-wise distillation, we direct $\m h^T$ to $\mathcal{F}^S$ in substitution of $\m h^S$ to reduce the historically accumulated errors, as is shown in Figure~\ref{fig:corr_connect}. 
We thus minimize the mean square error of convolutional outputs, which is defined as \textit{correction loss}:
\begin{equation}
	\label{eq:correctoin_loss}
	\mathcal{L}^c(\m W^S) =\|\m W^T * \m h^T - \m W^S * \m h^T\|_F^2.
\end{equation}
In the forward pass of $\mathcal{F}^S$, however, directing $\m h^T$ to $\mathcal{F}^S$ results in inconsistency $\epsilon^S = \|\m W^S * \m h^T - \m W^S * \m h^S \|_F^2$ because $\mathcal{F}^S$ takes $\m h^T$ from $\mathcal{F}^T$ in the training while it is expected to behave along during inference.
Therefore, minimizing the regularized  $\mathcal{L}^c$ could lead to a biasedly-optimized student net.



In order to maintain the consistency during forward pass for $\mathcal{F}^S$ and simultaneously make the teacher aware of the accumulated errors on the student net, we can inverse the strategy by guiding $\m h^S$ to $\mathcal{F}^T$, as is shown in Figure~\ref{fig:imi_connect}. 
We call this process as \textit{imitation}, since the student network tries to mimic the behavior of the teacher network given its current estimations.
Similarly we minimize the mean square error between the corresponding convolutional outputs, defined as the \textit{imitation loss}:
\begin{equation}
	\label{eq:imitation_loss}
	\mathcal{L}^i(\m W^S) = \|\m W^T * \m h^S - \m W^S * \m h^S\|_F^2.
\end{equation}
Despite the teacher network now can provide error-aware supervised signal, such connection brings inconsistency on the teacher network, i.e., 
$\epsilon^T=\|\m W^T * \m h^S - \m W^T * \m h^T\|_F^2$.
As a result of $\epsilon^T$, the errors in $\m h^S$ is be enlarged by $\m W^T$ during layer-wise propagation, leading to deviated supervision for $\mathcal{F}^S$ that deteriorates the distillation.

Consequently, the correction loss $\mathcal{L}^c$ and the imitation loss $\mathcal{L}^i$ compensate each other, and it is necessary to find a proper balance between them.
A natural choice is through convex combination tuned by $\mu$, i.e.
\begin{equation}
	\label{eq:cvx}
	\mathcal{\tilde{L}} = \mu \mathcal{L}^c + (1-\mu) \mathcal{L}^i, \hspace{2ex} \mu \in [0, 1].
\end{equation}
Substituting $\mathcal{L}^r$ in Equation~\ref{eq:sparse_regression} with $\mathcal{\tilde{L}}$ yields the objective function for cross distillation.



\subsubsection{Theoretical Analysis}
The inconsistency gaps $\epsilon^T$ and $\epsilon^S$ of cross distillation make it still unclear how the proposed method manipulates the propagation of estimation errors, and why minimizing the regularized $\mathcal{\tilde{L}}$ is on the right direction to improve the student net $\mathcal{F}^S$.
To theoretically justify cross distillation, we follow ~\cite{friedlander2007exact} to substitute $\mathcal{L}^r$ with $\mathcal{\tilde{L}}$, and equivalently reformulate the unconstrained problem in Equation~\ref{eq:sparse_regression} to the constrained optimization problem as 
\begin{equation}
	\label{eq:constrained_op}
	\min_{\m W^S \in \mathcal{C}} \mathcal{\tilde{L}}, \hspace{2ex} \mathcal{C}=\{ \m W^S \hspace{1ex} | \hspace{1ex} \mathcal{R}(\m W^S) \leq \epsilon(\lambda)\},
\end{equation}
where $\mathcal{C}$ is a compact set determined by the regularization $\mathcal{R}(\m W^S)$ and $\lambda$. 
With Equation~\ref{eq:constrained_op}, we can now bound the gap of cross entropy between $\mathcal{F}^T$ and $\mathcal{F}^S$ for classification\footnote{For regression problems, similar theorem can be established as well.} with the following theorem.

\begin{theorem}
	\label{theorem}
	Suppose both $\mathcal{F}^T$ and $\mathcal{F}^S$ are $L$-layer convolutional neural networks followed by the un-pruned softmax fully-connected layer. If the activation functions  $\sigma(\cdot)$ are Lipchitz-continuous such as $\mathrm{ReLU()}$, the gap of softmax cross entropy  $\mathcal{L}^{ce}$ between the network logits $\m o^T = \mathcal{F}^T(\m x)$ and $\m o^S = \mathcal{F}^S(\m x)$ can be bounded by
	\begin{equation}
		\label{eq:ce_upper_bound}
		| \mathcal{L}^{ce}(\m o^T; \m y) - \mathcal{L}^{ce}(\m o^S; \m y)| \leq C \tilde{\mathcal{L}}_L + \sum_{l=1}^{L-1}\prod_{k=l}^{L} C^{'}_k(\mu) \tilde{\mathcal{L}}_l,
	\end{equation}
	where $C$ and $C^{'}(\mu)$ are constants and $C^{'}(\mu)$ is linear in $\mu$.
\end{theorem}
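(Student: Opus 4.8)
The plan is to control the output gap in two stages: (i) reduce $|\mathcal{L}^{ce}(\m o^T;\m y)-\mathcal{L}^{ce}(\m o^S;\m y)|$ to the discrepancy $\delta_L:=\|\m h^T_L-\m h^S_L\|$ of the last hidden representations, and (ii) establish a backward recursion that converts the per-layer losses $\tilde{\mathcal{L}}_l$ into a bound on $\delta_L$. For (i): the softmax cross-entropy is globally Lipschitz in its logits, since its gradient in $\m o$ is $\mathrm{softmax}(\m o)-\m y$, whose norm is bounded; hence $|\mathcal{L}^{ce}(\m o^T;\m y)-\mathcal{L}^{ce}(\m o^S;\m y)|\le C_0\|\m o^T-\m o^S\|$. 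Because the final softmax fully-connected layer is shared and un-pruned, $\|\m o^T-\m o^S\|\le\|\m W^{fc}\|\,\delta_L$, so the problem reduces to bounding $\delta_L$. (Here, and below, all weight and feature norms are finite because $\m W^S$ is confined to the compact set $\mathcal{C}$ of Equation~\ref{eq:constrained_op} and $\m x$ ranges over a fixed finite sample.)

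For (ii), the key observation — and the place where the structure of cross distillation is actually used — is that a single triangle-inequality split of $\sigma(\m W^T_l * \m h^T_{l-1})-\sigma(\m W^S_l * \m h^S_{l-1})$ exposes only one of the two losses. I would therefore split it two ways: inserting $\sigma(\m W^S_l * \m h^T_{l-1})$ exposes the correction loss, i.e.\ $\sqrt{\mathcal{L}^c_l}$, plus a propagation term bounded (by $1$-Lipschitzness of $\sigma$ and linearity of $*$, in a suitable operator norm) by $\|\m W^S_l\|\,\delta_{l-1}$; inserting $\sigma(\m W^T_l * \m h^S_{l-1})$ exposes the imitation loss, i.e.\ $\sqrt{\mathcal{L}^i_l}$, plus a propagation term bounded by $\|\m W^T_l\|\,\delta_{l-1}$. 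Taking the $\mu$-weighted convex combination of the two bounds and using concavity of $\sqrt{\cdot}$ (Jensen) to pass from $\mu\sqrt{\mathcal{L}^c_l}+(1-\mu)\sqrt{\mathcal{L}^i_l}$ to $\sqrt{\mu\mathcal{L}^c_l+(1-\mu)\mathcal{L}^i_l}=\sqrt{\tilde{\mathcal{L}}_l}$ yields
\[
\delta_l \ \le\ \sqrt{\tilde{\mathcal{L}}_l}\ +\ A_l(\mu)\,\delta_{l-1},\qquad A_l(\mu):=\mu\|\m W^S_l\|+(1-\mu)\|\m W^T_l\|.
\]
The propagation factor $A_l(\mu)$ is affine in $\mu$, which is exactly the origin of the ``$C'_k(\mu)$ linear in $\mu$'' in the statement.

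To finish, unroll the recursion from $\delta_0=0$ (both networks receive the same input), giving $\delta_L\le\sum_{l=1}^{L}\bigl(\prod_{k=l+1}^{L}A_k(\mu)\bigr)\sqrt{\tilde{\mathcal{L}}_l}$; multiplying by $C_0\|\m W^{fc}\|$, pulling out the $l=L$ term (empty product) as the $C\,\tilde{\mathcal{L}}_L$ term, and folding the bounded weight norms into the remaining coefficients gives Equation~\ref{eq:ce_upper_bound}, where the per-layer contributions enter through $\sqrt{\tilde{\mathcal{L}}_l}$ and are relabelled as the per-layer error terms absorbed into $C$ and $C'_k(\mu)$.

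I expect the main obstacle to be precisely the two-way splitting step. Because directing $\m h^T$ into $\mathcal{F}^S$ (correction) and $\m h^S$ into $\mathcal{F}^T$ (imitation) introduces the inconsistency gaps $\epsilon^S$ and $\epsilon^T$, neither $\mathcal{L}^c_l$ nor $\mathcal{L}^i_l$ alone simultaneously controls the freshly injected error \emph{and} the inherited error $\delta_{l-1}$: the correction split leaves a propagated error scaled by $\|\m W^S_l\|$ only, the imitation split by $\|\m W^T_l\|$ only, and each ``loses'' its own inconsistency term. One needs the two complementary telescopings so that the convex combination $\tilde{\mathcal{L}}_l$ absorbs the injected error while the propagation constant continuously interpolates between $\|\m W^S_l\|$ and $\|\m W^T_l\|$ as $\mu$ sweeps $[0,1]$; this is also what makes the bound reveal how $\mu$ trades off the two behaviours. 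A secondary, more routine point is the uniform boundedness of all constants, which is where compactness of $\mathcal{C}$ and finiteness of the training sample are invoked.
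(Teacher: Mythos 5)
Your proposal follows essentially the same route as the paper's proof: a Lipschitz bound on the softmax cross-entropy reducing the problem to $\|\m h_L^T - \m h_L^S\|$ (the paper's Lemma~1), followed by the two complementary triangle-inequality insertions of $\sigma(\m W^S_l * \m h^T_{l-1})$ and $\sigma(\m W^T_l * \m h^S_{l-1})$, their $\mu$-convex combination yielding the propagation factor $\mu\|\m W^S_l\|+(1-\mu)\|\m W^T_l\|$, and an unrolled recursion from $\delta_0=0$ (the paper's Lemma~2). The only substantive difference is that you explicitly reconcile the squared-norm definitions of $\mathcal{L}^c,\mathcal{L}^i$ with the triangle inequality via $\sqrt{\cdot}$ and Jensen, a point the paper silently elides by treating the losses as unsquared norms in its proof, so your version is if anything slightly more careful on that step.
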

Theorem~\ref{theorem} shows that 
1) the gap of cross entropy between the student network $\mathcal{F}^S$ and teacher network $\mathcal{F}^T$ is upper bounded by $\tilde{\mathcal{L}}$, and therefore layer-wise minimization of the constrained optimization problem in Equation~\ref{eq:constrained_op} could decrease the gap of cross entropy and finally improve $\mathcal{F}^S$. 
2) The tightness of the upper bound is controlled by the trade-off hyper-parameter $\mu$, which is a $L$-th order polynomial. A proper choice of $\mu$ may lead to a tighter bound that could better decrease the cross entropy gap.
We leave the proof of Theorem~\ref{theorem} in the Appendix.




\paragraph{Soft Cross Distillation}
Although minimizing $\mathcal{\tilde{L}}$ is theoretically supported, the computation of $\mathcal{\tilde{L}}$ involves two loss terms with four convolutions to compute per batch of data, which doubles the training time.
Here we propose another variant to balance $\mathcal{L}^c$ and $\mathcal{L}^i$ by empirically soften the hard connection of $\m h^S$ and $\m h^T$, as is shown in Figure~\ref{fig:soft_connect}. 
We define feature maps $\hat{\m h}^T$ and $\hat{\m h}^S$ after cross connection as the convex combination of $\m h^T$ and $\m h^S$, i.e., 

\begin{equation}
	\label{eq:cvx_combine}
	\left[\begin{matrix}
		\hat{\m h}^T \\
		\hat{\m h}^S
	\end{matrix}
	\right]
	=
	\left [\begin{matrix}
		\alpha & 1-\alpha \\
		1 - \beta & \beta
	\end{matrix}
	\right ]
	\left[\begin{matrix}
		\m h^T \\
		\m h^S
	\end{matrix}
	\right],
\end{equation}
where $\alpha, \beta\in[0, 1]$ are the hyper-parameters that adjust how many percentages are used for cross connection, and therefore the magnitude of inconsistencies $\epsilon^T$ and $\epsilon^S$ can be well controlled. The convex combination ensures the norm of input to be nearly identical after cross connection~(assuming $\|\m h^T\| \approx \|\m h^S \|$), and therefore parameter magnitude stays unchanged. We define the loss of soft cross distillation as
\begin{equation}
	\label{eq:cross_loss}
	\mathcal{\hat{L}}(\m W^S) = \|\sigma(\m W^T * \hat{\m h}^T) -  \sigma(\m W^S *\hat{\m h}^S)\|_F^2,
\end{equation}
which can substitute the estimation error $\mathcal{L}^r$ in Equation~\ref{eq:sparse_regression} as an alternative way for cross distillation.

\subsection{Combined with Network Pruning}

Cross distillation can be readily combined with a set of popular network compression techniques such as pruning or quantization, by taking different regularization $\mathcal{R}(\m W^S)$ in Equation~\ref{eq:sparse_regression}. 
Here we take pruning as an illustration example. 
For non-structured pruning, we choose $\mathcal{R} = \|\m W^S\|_1 = \sum_{i,j,h,w} |W_{ijhw}^S|$; and for structured pruning such as channel pruning, we choose $\mathcal{R}(\m W^S) = \|\m W^S\|_{2,1} = \sum_{i} \|\m W_{i}^S\|_2$, where $\m W_{i}^S \in R^{c_i \x k \x k}$.

To solve Equation~\ref{eq:cross_loss} regularized by the above penalties, we can adopt the proximal gradient method~\cite{parikh2014proximal}, i.e.,  iteratively update $\mathcal{\m W}^S$ by:
\begin{equation}
	\label{eq:prox_update}
	\m W^S_{t+1} = \prox_{\lambda\mathcal{R}}(\m W^S_{t} - \eta \nabla \mathcal{\tilde{L}}(\m W^S_t)),
\end{equation} 
where $\prox_{\lambda \mathcal{R}}(u) = \arg\min_x \frac{1}{2}\|x - u \|_F^2 + \mathcal{R}(x)$ is the proximal operator for $\mathcal{R}$. When $\mathcal{R}$ is chosen as $\|\cdot\|_1$, the proximal mapping can be expressed as the soft-threshold determined by $\lambda$, i.e., 
\begin{equation}
	\small
	\label{eq:soft_thresh}
	\prox_{\lambda \|\cdot\|_1}(W^S_{ijhw}) = \left \{
	\begin{array}{ccl}
		W^S_{ijhw} - \lambda & & W^S_{ijhw} > \lambda \\
		0 & &  |W^S_{ijhw}| \leq \lambda \\
		W^S_{ijhw}  + \lambda & & W^S_{ijhw} < -\lambda 
	\end{array}.
	\right.
\end{equation}
For structured pruning, since $\mathcal{R} = \| \m W^S \|_{2,1}$ is separable w.r.t. $\m W_i^S$, the proximal mapping for $\prox_{\lambda \|\cdot\|_{2,1}} (\m W_i^S)$ can be computed as
\begin{equation}
	\label{eq:group_lasso}
	\prox_{\lambda \|\cdot \|_2} (\m W_i^S) = \max(1 - \frac{\lambda}{\|\m W_i^S\|_2}, 0)\cdot \m W_i^S,
\end{equation}
and the solution to Equation~\ref{eq:prox_update} can be obtained group-wisely from Equation~\ref{eq:group_lasso}. 

As suggested by previous works~\cite{zhu2017prune,he2017channelprunning}, 
we linearly increase $\lambda$ to smoothly prune the student network, which empirically gives better results. Given the 
maximum number of training steps $T$ and the target sparsity ratio $r$ assigned by users, we update $\lambda$ by $\lambda_t = r + (1 - r) * t / T$.
An overall workflow of our proposed method is given in Algorithm~\ref{alg:workflow}.

Finally, we remark that our method works for network quantization as well. By taking $\mathcal{R}(\m W^S)$ as the penalty to quantization points, our method can be combined with Straight Through Estimator (STE)~\cite{bengio2013ste} or ProxQuant~\cite{bai2018proxquant}. See Appendix for details.

\begin{algorithm}[h]
	\caption{Cross distillation}
	\label{alg:workflow}
	\begin{algorithmic}[1]
		\footnotesize
		\REQUIRE ~~\\
		The pre-trained teacher model $\mathcal{F}^T$;\\
		Training samples $\{\m x_n, \m y_n\}_{n=1}^N$; \\
		Target sparsity ratio $r$; 
		\ENSURE  ~~\\
		The compact student model $\mathcal{F}^S$
		\FOR{$l=1,...L$}
		\FOR{$t=1,...T$}
		\STATE Forward pass $\{\m x_n\}_{n=1}^N$ to obtain $\m h_{l-1}^T$ and $\m h_{l-1}^S$;
		\STATE Calculate the loss in Equation~\ref{eq:cvx} or~\ref{eq:cross_loss};
		\STATE Update $\m W^S_t$ with SGD/Adam optimizer;
		\STATE Obtain $\m W^S_{t+1}$ with $\prox_{\lambda \mathcal{R}}$ in Equation~\ref{eq:soft_thresh} or \ref{eq:group_lasso};
		\STATE Increase the pruning threshold $\lambda_t$ linearly;
		\ENDFOR
		\ENDFOR
	\end{algorithmic}
\end{algorithm}

\section{Experiments}
\label{sec:experiment}

We conduct a series of experiments to verify the effectiveness of cross distillation. We take structured and unstructured pruning for demonstration, both of which are popular approaches to reduce computational FLOPs and sizes of neural networks. 
To better understand the proposed method, we also provide further analysis on how cross distillation help reduce the estimation error against varying size of the training set.
Due to limited space, we only present main results, while additional experiments and detailed implementations can be found in the Appendix.
Our implementation in PyTorch is available at \url{https://github.com/haolibai/Cross-Distillation.git}.

\begin{table*}[t]
	\caption{The top-1 accuracy (\%) of structured pruning with VGG-16 on CIFAR-10 with different training sizes. We use VGG-50\% as the pruning scheme, and the original accuracy of the original model is 93.51\%.}
	\label{tab:vgg_cifar_K_shots}
	\footnotesize
	\centering
	\begin{tabular}{ccccccc}  
		\toprule
		Methods  &  1  &  2 & 3 & 5 & 10 & 50 \\
		\midrule
		L1-norm & $14.36_{\pm 0.00}$ & $14.36_{\pm 0.00}$ & $14.36_{\pm 0.00}$ & $14.36_{\pm 0.00}$ & $14.36_{\pm 0.00}$ & $14.36_{\pm 0.00}$\\
		BP & $49.24_{\pm 1.76}$ & $49.32_{\pm 1.88}$ & $51.39_{\pm 1.53}$ & $55.73_{\pm 1.19}$ & $57.48_{\pm 0.91}$ & $64.69_{\pm 0.43}$\\
		FSKD & $47.91_{\pm 1.82}$ & $55.44_{\pm 1.71}$ & $61.76_{\pm 1.39}$ & $65.69_{\pm 1.08}$& $72.20_{\pm 0.74}$ & $75.46_{\pm 0.49}$ \\
		FitNet & $48.51_{\pm 2.51}$ & $71.51_{\pm 2.03}$ & $76.22_{\pm 1.95}$ & $81.10_{\pm 1.13}$ & $85.40_{\pm 1.02}$ & $88.46_{\pm 0.76}$\\
		ThiNet & $58.06_{\pm 1.71}$ & $72.07_{\pm 1.68}$ & $75.37_{\pm 1.59}$ & $78.03_{\pm 1.24}$ & $81.15_{\pm 0.85}$ & $86.12_{\pm 0.45}$\\
		CP  & $66.03_{\pm 1.56}$ & $75.23_{\pm 1.49}$ & $77.98_{\pm 1.47}$ & $81.53_{\pm 1.29}$ & $83.59_{\pm 0.78}$ & $87.27_{\pm 0.27}$\\
		\midrule
		Ours-NC& $65.57_{\pm 1.61}$ & $75.44_{\pm 1.69}$ & $78.40_{\pm 1.53}$ & $81.20_{\pm 1.13}$ & $84.07_{\pm 0.83}$ & $87.67_{\pm 0.29}$\\
		Ours & $\mathbf{69.25_{\pm 1.39}}$ & $\mathbf{80.65_{\pm 1.47}}$ & $\mathbf{82.08_{\pm 1.41}}$ & $\mathbf{84.91_{\pm 0.98}}$ & $\mathbf{86.61_{\pm 0.71}}$ & $87.64_{\pm 0.24}$ \\
		Ours-S & $68.53_{\pm 1.59}$ & $76.83_{\pm 1.43}$ & $80.16_{\pm 1.32}$ & $84.28_{\pm 1.19}$ & $86.30_{\pm 0.79}$ & $\mathbf{88.65_{\pm 0.33}}$  \\
		\bottomrule
	\end{tabular}
\end{table*}

\begin{table*}[t]
	\caption{The top-5 accuracy (\%) of structured pruning with ResNet-34 on ILSVRC-12 with different training sizes. The first three columns use 50, 100 and 500 randomly sampled training instances, while the last three columns use $K=1, 2, 3$ samples per class. We use Res-50\% as the pruning scheme, and the top-1 and top-5 accuracies of the original model are 73.32\% and 91.40\%.}
	\label{tab:res_ilsvrc_K_shots}
	\footnotesize
	\centering
	\begin{tabular}{ccccccc}  
		\toprule
		Methods  &  50  & 100 & 500 & 1 & 2 & 3 \\
		\midrule
		L1-norm & $72.94_{\pm 0.00}$ & $72.94_{\pm 0.00}$ & $72.94_{\pm 0.00}$ & $72.94_{\pm 0.00}$ & $72.94_{\pm 0.00}$ & $72.94_{\pm 0.00}$\\
		BP & $83.18_{\pm 1.86}$ & $84.32_{\pm 1.29}$ & $85.34_{\pm 0.89}$ & $85.76_{\pm 0.73}$ & $86.05_{\pm 0.51}$ & $86.29_{\pm 0.56}$\\
		FSKD & $82.53_{\pm 1.52}$ & $84.58_{\pm 1.13}$ & $86.67_{\pm 0.78}$ & $87.08_{\pm 0.76}$& $87.23_{\pm 0.52}$ & $87.20_{\pm 0.43}$ \\
		FitNet & $86.86_{\pm 1.81}$ & $87.12_{\pm 1.63}$ & $87.73_{\pm 0.96}$ & $87.66_{\pm 0.84}$ & $88.61_{\pm 0.76}$ & $\mathbf{89.32_{\pm 0.78}}$\\
		ThiNet & $85.67_{\pm 1.57}$ & $85.54_{\pm 1.39}$ & $86.97_{\pm 0.89}$ & $87.42_{\pm 0.76}$ & $87.52_{\pm 0.68}$ & $87.53_{\pm 0.50}$\\
		CP  & $86.34_{\pm 1.24}$ & $86.38_{\pm 1.37}$ & $87.41_{\pm 0.80}$ & $88.03_{\pm 0.66}$ & $87.98_{\pm 0.49}$ & $88.21_{\pm 0.37}$\\
		\midrule
		Ours-NC& $86.51_{\pm 1.71}$ & $86.61_{\pm 1.20}$ & $87.92_{\pm 0.75}$ & $87.98_{\pm 0.60}$ & $88.63_{\pm 0.49}$ & $88.82_{\pm 0.38}$\\
		Ours & $86.95_{\pm 1.59}$ & $87.60_{\pm 1.13}$ & $88.34_{\pm 0.69}$ & $88.17_{\pm 0.73}$ & $88.57_{\pm 0.40}$ & $88.59_{\pm 0.41}$ \\
		Ours-S & $\mathbf{87.42_{\pm 1.69}}$ & $\mathbf{87.73_{\pm 1.17}}$ & $\mathbf{88.60_{\pm 0.82}}$ & $\mathbf{88.40_{\pm 0.61}}$ & $\mathbf{88.84_{\pm 0.48}}$ & $88.87_{\pm 0.35}$  \\
		\bottomrule
	\end{tabular}
\end{table*}

\subsection{Setup}
Throughout the experiment, we use VGG~\cite{simonyan2014very} and ResNet~\cite{he2016deep} as base networks, and evaluations are performed on CIFAR-10 and ImageNet-ILSVRC12.
As we consider the setting of few shot image classification, we randomly select $K$-shot instances per class from the training set. All experiments are averaged over five runs with different random seeds, and results of means and standard deviations are reported~\footnote{Note that for each run, we fix the random seed and remove all the randomness such as data augmentation and data shuffling.}

\paragraph{Baselines}
For structured pruning, we compare our proposed methods against a number of baselines: 1) L1-norm pruning~\cite{li2016pruning}, a data-free approach; 2) Back-propagation~(BP) based fine-tuning on L1-norm pruned models; 3) FitNet~\cite{romero2014fitnets} and 4) FSKD~\cite{li2019knowledge}, both of which are knowledge distillation methods; 5) ThiNet~\cite{luo2017thinet} and 6) Channel Pruning~(CP)~\cite{he2017channelprunning}, both of which are layer-wise regression based channel pruning methods. 
For unstructured pruning, we modify 1) to element-wise L1-norm based pruning~\cite{zhu2017prune}. Besides, 4) FSKD, 5) ThiNet and 6) CP are removed since they are only applicable in channel pruning. 

For our proposed method, we compare to three variants for ablation study: Ours-NC (no cross distillation) by solving Equation~\ref{eq:sparse_regression}, Ours by solving Equation~\ref{eq:cvx} and Ours-S~(soft cross distillation) by solving Equation~\ref{eq:cross_loss}.
For Ours, we choose $\mu=0.6$ for VGG networks and $\mu=0.9$ for ResNets. For Ours-S, we set $(\alpha, \beta)=(0.9, 0.3)$ on VGG networks and $(0.9, 0.5)$ on ResNets. Sensitivity analysis on these hyper-parameters are presented later.
Details on parameter settings and baseline implementations are in the Appendix.

\paragraph{Pruning schemes}
The structured pruning schemes are similar to those used in ~\cite{li2016pruning,li2019knowledge}. For the VGG-16 network, we denote the three pruning schemes in~\cite{li2019knowledge} in the ascending order of sparsity as VGG-A, VGG-B and VGG-C respectively. We further prune $50\%$ channels layer-wisely and denote the resulting scheme as VGG-$50\%$. 
For ResNet-34, we remove $r\%$ channels in the middle layer of the first three residual blocks with some sensitive layers skipped (e.g., layer 2, 8, 14, 16). The last residual block is kept untouched. The resulting structured pruning schemes are denoted as Res-$r\%$. Besides, we further remove $50\%$ channels for the last block to reduce more FLOPs when $r=70\%$, denoted as Res-70\%+.
The reduction of model sizes and computational FLOPs for structured pruned models are shown in the Appendix. 

In terms of unstructured pruning, we follow a similar pattern in~\cite{zhu2017prune} by removing $r=\{50\%, 70\%, 90\%, 95\%\}$ parameters for both the VGG network and ResNet, and each layer is treated equally.

\subsection{Results}
\paragraph{Structured Pruning}
We evaluate structured pruning with VGG-16 on CIFAR-10 and ResNet-34 on ILSVRC-12. 
Table~\ref{tab:vgg_cifar_K_shots} and~\ref{tab:res_ilsvrc_K_shots} shows the results with different number of training instances when the pruning schemes are fixed.
It can be observed that both Ours and Ours-S generally outperform the rest baselines on both networks, whereas Ours enjoys a larger advantage on VGG-16 while Ours-S is superior on ResNet-34. 
Meanwhile, as the training size decreases, cross distillation brings more advantages comparing to the rest baselines, indicating that the layer-wise regression can benefit more from cross distillation when the student network over-fits more seriously on fewer training samples.

In the next, we fix the training size and change the pruning schemes. We keep $K=5$ on CIFAR-10 and $K=1$ on ILSVRC-12, and the results are listed in Table~\ref{tab:vgg_cifar_schemes} and Table~\ref{tab:resnet_ilsvrc_schemes} respectively. 
Again on both datasets our proposed cross distillation performs consistently better comparing to the rest approaches. 
Besides, the gain from cross distillation becomes larger as the sparsity of the student network increases~(e.g., VGG-C and ResNet-70\%+). We suspect that networks with sparser structures tend to suffer more from higher estimation errors, which poses more necessity for cross distillation to reduce the errors.

\begin{table*}
	\centering
	\footnotesize
	\setlength{\tabcolsep}{2 pt}{
	\begin{minipage}{0.49\textwidth}
	\caption{The top-1 accuracy (\%) of different structured pruning schemes with VGG-16 on CIFAR-10. 10 samples per class are used.}
	\label{tab:vgg_cifar_schemes}
	\begin{tabular}{lllllll}  
		\toprule
		Methods  &  VGG-50\%  &  VGG-A & VGG-B & VGG-C \\
		\midrule
		L1-norm &  $14.36_{\pm 0.00}$ & $88.32_{\pm 0.00}$ & $32.87_{\pm 0.00}$ & $10.00_{\pm 0.00}$  \\
		BP & $55.73_{\pm 1.19}$ & $93.10_{\pm 0.09}$ & $87.17_{\pm 0.49}$ &  $62.45_{\pm 1.25}$ \\
		FSKD & $65.69_{\pm 1.08}$ & $93.52_{\pm 0.23}$ & $90.69_{\pm 0.12}$ & $81.79_{\pm 1.01}$ \\
		FitNet & $85.40_{\pm 1.02}$ & $93.50_{\pm 0.06}$ & $92.42_{\pm 0.32}$ & $84.65_{\pm 1.53}$ \\
		ThiNet & $81.15_{\pm 0.85}$ & $93.61_{\pm 0.05}$ & $92.20_{\pm 0.16}$ & $79.19_{\pm 0.91}$\\
		CP  & $83.59_{\pm 0.78}$ & $\mathbf{93.70_{\pm 0.04}}$ & $92.29_{\pm 0.15}$ & $80.82_{\pm 0.73}$ & \\
		\midrule
		Ours-NC & $84.07_{\pm 0.83}$ & $\mathbf{93.69_{\pm 0.07}}$ & $92.35_{\pm 0.14}$ & $83.90_{\pm 0.78}$ \\
		Ours & $\mathbf{86.61_{\pm 0.71}}$ & $\mathbf{93.65_{\pm 0.08}}$ & $\mathbf{92.60_{\pm 0.11}}$ & $\mathbf{85.81_{\pm 0.80}}$ \\
		Ours-S & $86.30_{\pm 0.79}$ & $\mathbf{93.70_{\pm 0.07}}$ & $\mathbf{92.68_{\pm 0.13}}$  &  $85.10_{\pm 0.75}$ \\
		\bottomrule
	\end{tabular}
	\end{minipage}
	\begin{minipage}{0.49\textwidth}
	\caption{The top-5 accuracy (\%) of different structured pruning schemes with ResNet-34 on ILSVRC-12. 1 sample per class is used.}
	\label{tab:resnet_ilsvrc_schemes}
	\begin{tabular}{lllllll}  
	\toprule
	Methods  &  Res-30\%  &  Res-50\% & Res-70\% & Res-70\%+ \\
	\midrule
	L1-norm &  $84.54_{\pm 0.00}$ & $72.94_{\pm 0.00}$ & $31.84_{\pm 0.00}$ & $15.30_{\pm 0.00}$  \\
	BP & $88.66_{\pm 0.59}$ & $85.76_{\pm 0.73}$ & $80.04_{\pm 0.90}$ &  $63.25_{\pm 1.05}$ \\
	FSKD & $89.56_{\pm 0.52}$ & $87.08_{\pm 0.76}$ & $80.82_{\pm 0.62}$ & $67.04_{\pm 0.56}$ \\
	FitNet & $88.56_{\pm 0.58}$ & $87.66_{\pm 0.84}$ & $\mathbf{82.72_{\pm 0.88}}$ & $68.31_{\pm 0.81}$ \\
	ThiNet & $89.74_{\pm 0.65}$ & $87.42_{\pm 0.76}$ & $79.40_{\pm 0.66}$ & $63.65_{\pm 0.78}$\\
	CP  & $89.65_{\pm 0.78}$ & $88.03_{\pm 0.66}$ & $81.13_{\pm 0.85}$ & $68.18_{\pm 0.79}$ & \\
	\midrule
	Ours-NC& $\mathbf{90.34_{\pm 0.53}}$ & $87.98_{\pm 0.60}$ & $82.11_{\pm 0.71}$ & $69.03_{\pm 0.92}$ \\
	Ours & $90.08_{\pm 0.47}$ & $88.17_{\pm 0.65}$ & $\mathbf{82.71}_{\pm 0.76}$ & $\mathbf{73.53}_{\pm 0.74}$ \\
	Ours-S & $\mathbf{90.32_{\pm 0.58}}$ & $\mathbf{88.40_{\pm 0.61}}$ & $82.65_{\pm 0.68}$  &  $69.47_{\pm 0.79}$ \\
	\bottomrule
	\end{tabular}
	\end{minipage}}
\end{table*}

\begin{table*}[t]
	\caption{The top-5 accuracy (\%) of unstructured pruning with VGG-16 on ILSVRC-12 with different training sizes. The first three columns use 50, 100 and 500 randomly sampled training instances, while the last three columns use $K =1, 2, 3$ samples per class. We use Res-90\% as the pruning scheme, and the top-1 and top-5 accuracies of the original model are 73.72\% and 91.51\%.}
	\label{tab:vgg_ilsvrc_un_K_shot}
	\centering
	\footnotesize
	\begin{tabular}{ccccccc}  
		\toprule
		Methods  &  50 &  100 & 500 & 1 & 2 & 3 \\
		\midrule
		L1-norm &  $0.5_{\pm 0.00}$ & $0.5_{\pm 0.00}$  & $0.5_{\pm 0.00}$ &  $0.5_{\pm 0.00}$ &  $0.5_{\pm 0.00}$ &  $0.5_{\pm 0.00}$\\
		BP & $42.87_{\pm 2.07}$ & $48.78_{\pm 1.43}$ & $65.47_{\pm 1.15}$ & $71.25_{\pm 0.97}$ & $74.85_{\pm 0.71}$ & $76.04_{\pm 0.48}$\\
		FitNet & $52.66_{\pm 2.93}$ & $57.09_{\pm 2.14}$ & $76.59_{\pm 1.45}$ & $80.14_{\pm 1.23}$ & $82.27_{\pm 0.70}$ & $83.14_{\pm 0.51}$\\
		\midrule
		Ours-NC& $78.73_{\pm 1.78}$ & $83.29_{\pm 1.12}$ & $85.04_{\pm 0.93}$ & $85.36_{\pm 0.61}$ & $85.21_{\pm 0.41}$ & $85.49_{\pm 0.46}$\\
		Ours & $\mathbf{83.81_{\pm 1.49}}$ & $86.21_{\pm 1.09}$ & $87.19_{\pm 0.96}$ & $87.61_{\pm 0.82}$ & $87.78_{\pm 0.45}$ & $87.86_{\pm 0.39}$ \\
		Ours-S & $83.67_{\pm 1.52}$ & $\mathbf{86.72_{\pm 1.23}}$ & $\mathbf{87.82_{\pm 1.04}}$ & $\mathbf{88.14_{\pm 0.74}}$ & $\mathbf{88.23_{\pm 0.61}}$ & $\mathbf{88.38_{\pm 0.43}}$  \\
		\bottomrule
	\end{tabular}
\end{table*}

\paragraph{Unstructured Pruning}
For unstructured pruning, here we present results of the VGG-16 network on ILSVRC-12 dataset.
Similar to structured pruning, we first fix the pruning scheme and vary the training size, and the results are given in Table~\ref{tab:vgg_ilsvrc_un_K_shot}. It can be observed that both Ours and Ours-S significantly outperform the rest methods, and the improvement is even larger comparing to structured pruning
One reason could be the irregular sparsity of network parameters cab better compensate the layer-wisely accumulated errors on $\mathcal{F}^S$.

Similarly, we test our methods with different sparsities and hold the training size fixed as $K=1$, and Table~\ref{tab:vgg_ilsvrc_un_schemes} shows the results.
As the sparsity $r$ increases, cross distillation brings more improvement, especially on  VGG-95\% with a nearly 10\% and 14\% increase of accuracy for Ours and Ours-S respectively.

\begin{table}
	\caption{The top-5 accuracy (\%) of unstructured pruning with VGG-16 on ILSVRC-12 with different pruning schemes. 1 sample per class is adopted.}
	\label{tab:vgg_ilsvrc_un_schemes}
	\centering
	\footnotesize
	\setlength{\tabcolsep}{2 pt}{
	\begin{tabular}{ccccccc}  
		\toprule
		Methods  &  VGG-50\%  &  VGG-70\% & VGG-90\% & VGG-95\% \\
		\midrule
		L1-norm & $89.21_{\pm 0.00}$ & $66.91_{\pm 0.00}$ & $0.5_{\pm 0.00}$ & $0.50_{\pm 0.00}$  \\
		BP & $90.61_{\pm 0.20}$ & $88.08_{\pm 0.19}$ & $71.25_{\pm 0.97}$ &  $42.37_{\pm 1.59}$ \\
		FitNet & $88.36_{\pm 0.46}$ & $86.76_{\pm 0.67}$ & $80.14_{\pm 1.23}$ & $59.08_{\pm 1.78}$ \\
		\midrule
		Ours-NC& $91.47_{\pm 0.12}$ & $91.16_{\pm 0.10}$ & $85.21_{\pm 0.41}$ & $66.74_{\pm 1.36}$ \\
		Ours & $91.58_{\pm 0.06}$ & $91.24_{\pm 0.14}$ & $87.61_{\pm 0.49}$ & $76.65_{\pm 1.23}$ \\
		Ours-S & $\mathbf{91.68_{\pm 0.09}}$ & $\mathbf{91.54_{\pm 0.11}}$ & $\mathbf{88.14_{\pm 0.61}}$  &  $\mathbf{80.64_{\pm 1.03}}$ \\
		\bottomrule
	\end{tabular}}
\end{table}

\subsection{Further Analysis}
\paragraph{The Estimation Errors v.s. Inconsistency}
Cross distillation brings the inconsistencies $\epsilon^T$, $\epsilon^S$ that could affect the reduction of estimation errors $\mathcal{L}^r$.
To quantitatively investigate the effects, we compare $\epsilon^T$, $\epsilon^S$ as well as $\mathcal{L}^r$ at different layers of the VGG-16 network on the test set of CIFAR-10.
We take three student networks trained by the correction loss $\mathcal{L}^c$, the imitation loss $\mathcal{L}^i$ as well as soft distillation loss $\mathcal{\hat{L}}$ respectively.
We choose unstructured pruning with VGG-90\% and vary K between $\{1, 10\}$, and the results are shown in Figure~\ref{fig:teacher_gap},~\ref{fig:student_gap} and~\ref{fig:regu_loss} respectively. 
Note that we have normalized the loss values by dividing the nonzero leftmost bar in each sub-figure.

It can be observed that the student net trained by $\mathcal{L}^c$ has a large $\epsilon^S$ with $\epsilon^T=0$, and vice versa for that trained by $\mathcal{L}^i$. On the contrary, the student net trained by $\mathcal{\hat{L}}$ shows both lower $\epsilon^T$ and $\epsilon^S$, and the estimation error $\mathcal{L}^r$ is properly reduced as well. 
The results indicate that by properly controlling the magnitude of inconsistencies $\epsilon^T$ and $\epsilon^S$ with soft connection, cross distillation can indeed reduce estimation errors $\mathcal{L}^r$ and improve the student network.

\begin{figure*} [t]
	\centering 
	\label{fig:loss_story}
	\subfigure[$\epsilon^T$] { 
		\label{fig:teacher_gap}     
		\includegraphics[width=0.65\columnwidth]{./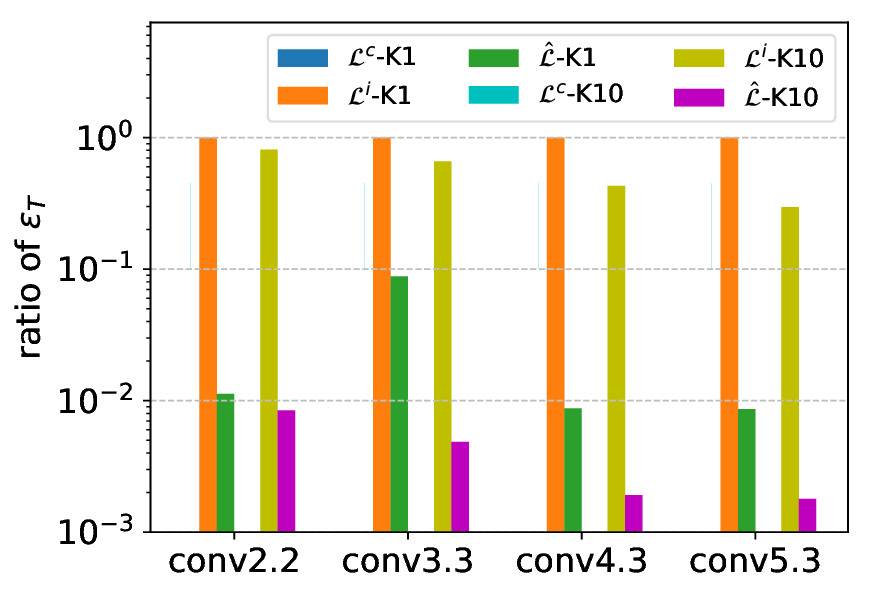}     
	} 
	\subfigure[$\epsilon^S$] { 
		\label{fig:student_gap}     
		\includegraphics[width=0.65\columnwidth]{./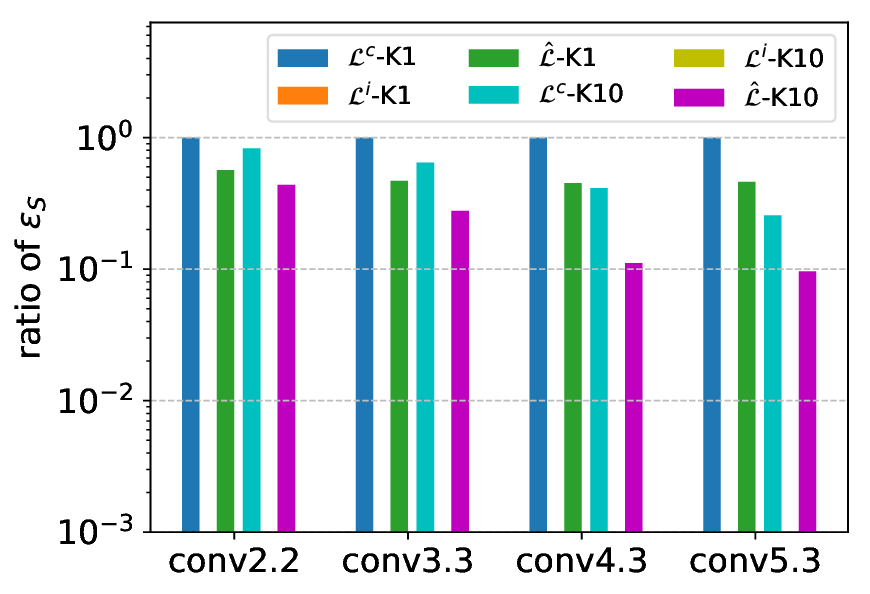}     
	}
	\subfigure[$\mathcal{L}^r$]{
		\label{fig:regu_loss}     
		\includegraphics[width=0.65\columnwidth]{./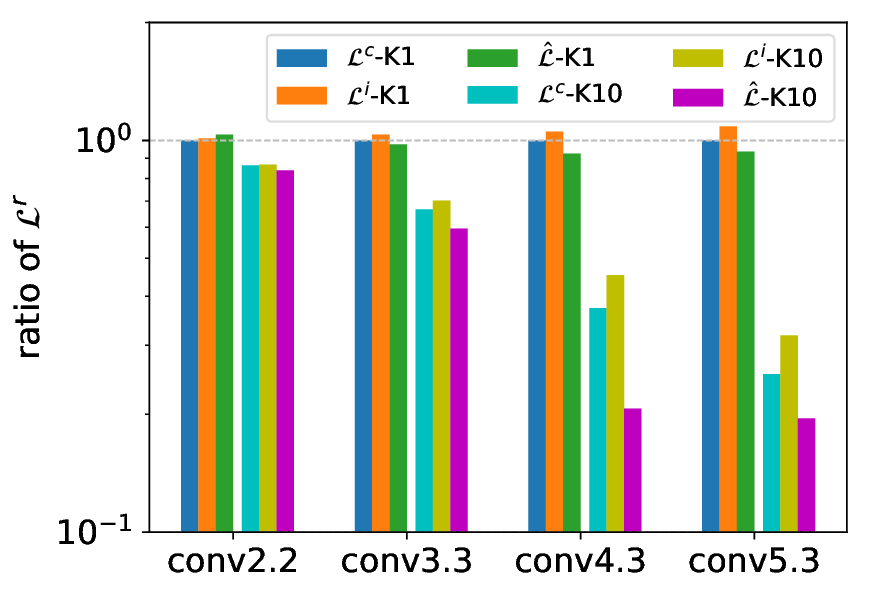}  
	}
	\caption{The comparisons among inconsistencies  $\epsilon^T$, $\epsilon^S$ as well as estimation errors $\mathcal{L}^r$ on the test set of CIFAR-10. The colors denote what kind of loss and values of $K$ are adopted for training. Best viewed in color.}
\end{figure*}

\begin{figure}
	\centering
	\includegraphics[width=0.95\columnwidth]{./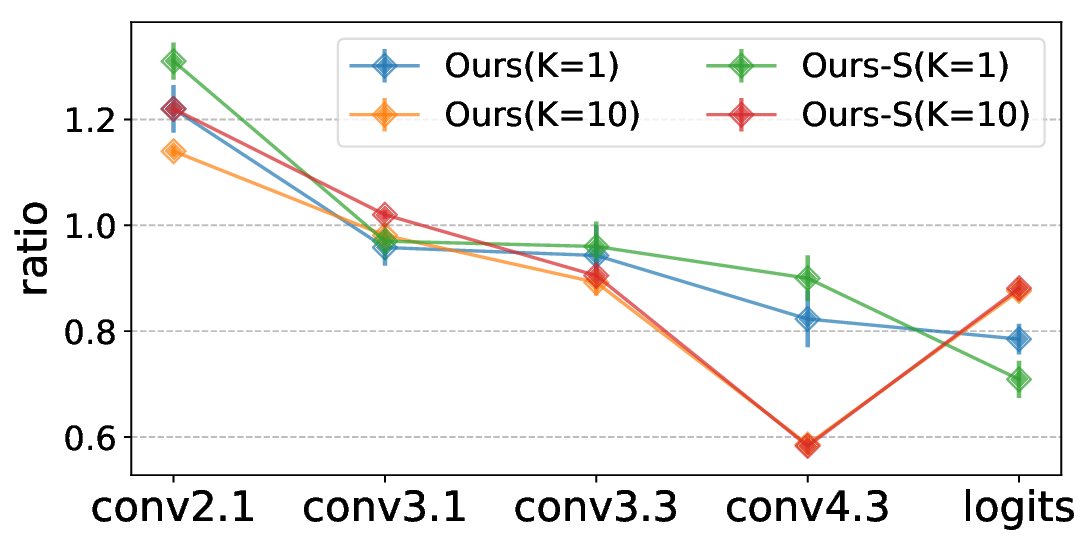}
	\caption{The estimation errors $\mathcal{L}^r$ of Ours and Ours-S, both of which are divided by Ours-NC. Best viewed in color.}
	\label{fig:estimation_inconsistency}
\end{figure}

\paragraph{Generalization Ability}
One potential issue troubles us is the generalization of cross distillation, since the training of Ours and Ours-S is somehow biased comparing to Ours-NC that directly minimizes the estimation error $\mathcal{L}^r$.
Since estimation errors $\mathcal{L}^r$ among feature maps and cross entropy $\mathcal{L}^{ce}$ of logits directly reflect the closeness between $\mathcal{F}^T$ and $\mathcal{F}^S$ during inference, we compare both results among student nets obtained by Ours-NC, Ours and Ours-S respectively.
We again take unstructured pruning with VGG-90\% on the test set of CIFAR-10, and the rest settings are kept unchanged.
For ease of comparison, we similarly divide values of Ours and Ours-S by those obtained by Ours-NC. Ratios smaller than 1 indicate a more generalizable student net.

From Figure~\ref{fig:estimation_inconsistency}, we can find that while the ratios in shallower layers are above 1, they rapidly go down at deeper layers such as conv4.3 as well as the logits, which is consistent with Figure~\ref{fig:estimation_inconsistency} that cross distillation tends to better benefit  deeper layers.
Moreover, although increasing $K$ from 1 to 10 gives lower ratios of $\mathcal{L}^r$ at convolutional layers, the ratios of $\mathcal{L}^{ce}$ increases at the network logits, which lead to less improvement for classification when more training samples are available. The phenomenons are consistent with the results in Table~\ref{tab:vgg_cifar_K_shots}, Table~\ref{tab:res_ilsvrc_K_shots} and Table~\ref{tab:vgg_ilsvrc_un_K_shot}.
In summary, cross distillation can indeed generalize well when $\mathcal{F}^T$ and $\mathcal{F}^S$ are properly mixed in the few shot setting.

\subsubsection{Sensitivity Analysis}
Finally, we present sensitivity analysis for cross distillation. We perform grid search by varying $\mu\in [0,1]$ for Ours and $(\alpha, \beta)\in [0,1]^2$ for Ours-S at an interval of $0.1$. 
We take VGG-16 for structured pruning and ResNet-56 for unstructured pruning on CIFAR-10 with $K=5$, while ILSVRC-12 experiments adopt the same setting of $\mu$ and $(\alpha,\beta)$ found by these experiments. 
From Figure~\ref{fig:cvx_vgg_grid} and Figure ~\ref{fig:cvx_resnet_grid}, Ours consistently outperforms Ours-NC, where the best configurations appear at around $\mu=0.6$ for VGG-16 and $\mu=0.9$ for ResNet-56.
Furthermore, we find that simply using the correction loss $\mu=0.0$ or the imitation loss $\mu=1.0$ also achieve reasonable results\footnote{The accuracies are $83.44\%$ and $83.32\%$ respectively on VGG-16, and $84.93\%$ and $86.63\%$ respectively on ResNet-56.}. 
In terms of Ours-S in Figure~\ref{fig:vgg_grid} and ~\ref{fig:resnet_grid} , we find that on left regions $\{(\alpha, \beta)|\alpha+\beta < 1\}$  $\mathcal{F}^T$ and $\mathcal{F}^S$ permute the input too much and thereon lead to significant drops of performance.
For right regions $\{(\alpha, \beta)|\alpha+\beta > 1\}$, most configurations consistently outperform Ours-NC $(1.0, 1.0)$, and the peaks occur somewhere in the middle of the regions.

\begin{figure}[t]\centering 
	\label{fig:grid_search}
	\subfigure[Ours on VGG-16]{
		\label{fig:cvx_vgg_grid}     
		\includegraphics[width=0.46\columnwidth]{./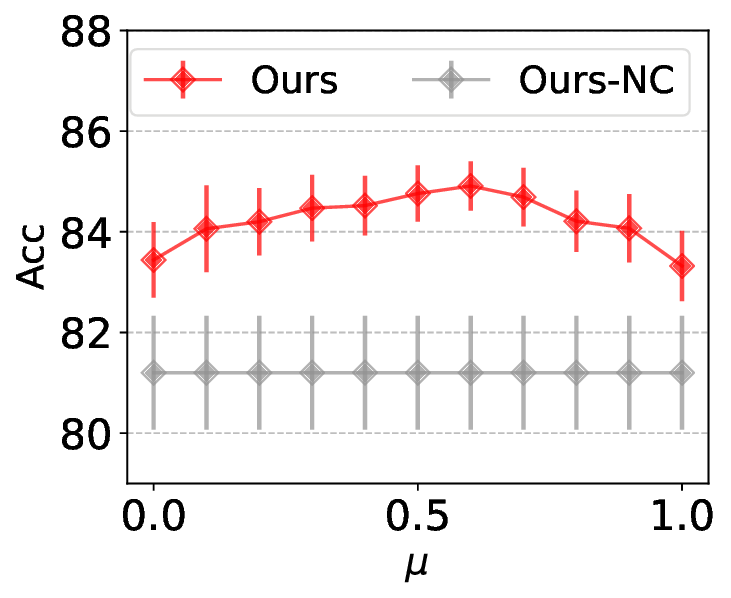}  
	}
	\subfigure[Ours on ResNet-56]{
		\label{fig:cvx_resnet_grid}     
		\includegraphics[width=0.46\columnwidth]{./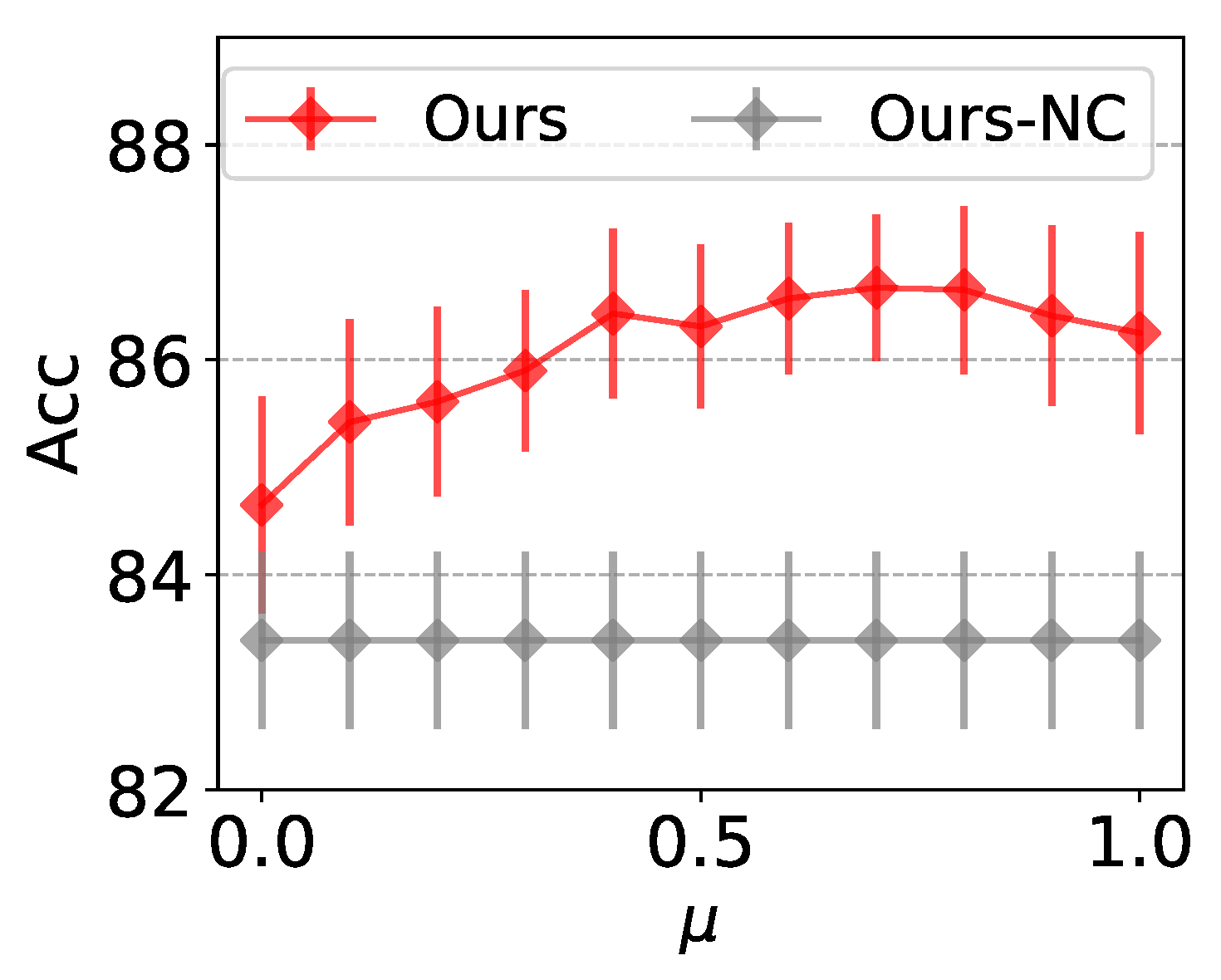}  
	}
	\subfigure[Ours-S on VGG-16]{
		\label{fig:vgg_grid}     
		\includegraphics[width=0.46\columnwidth]{./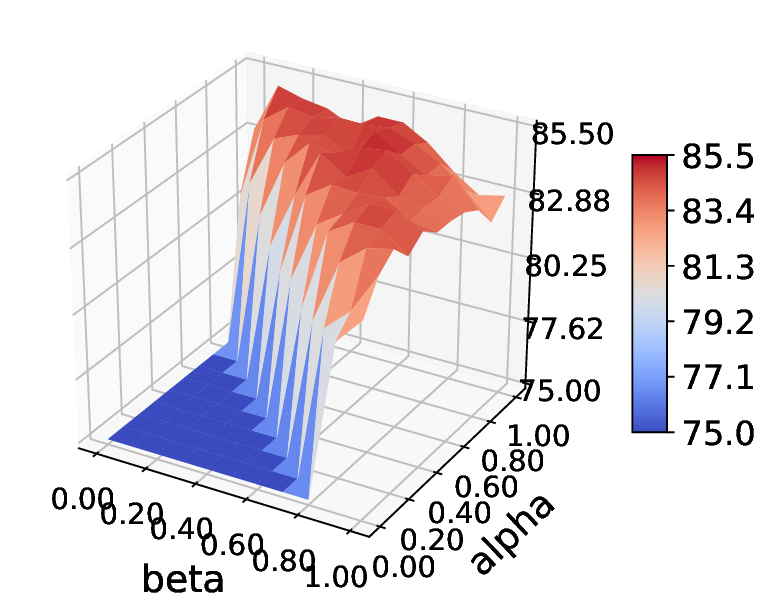}  
	}     
	\subfigure[Ours-S on ResNet-56] { 
		\label{fig:resnet_grid}     
		\includegraphics[width=0.46\columnwidth]{./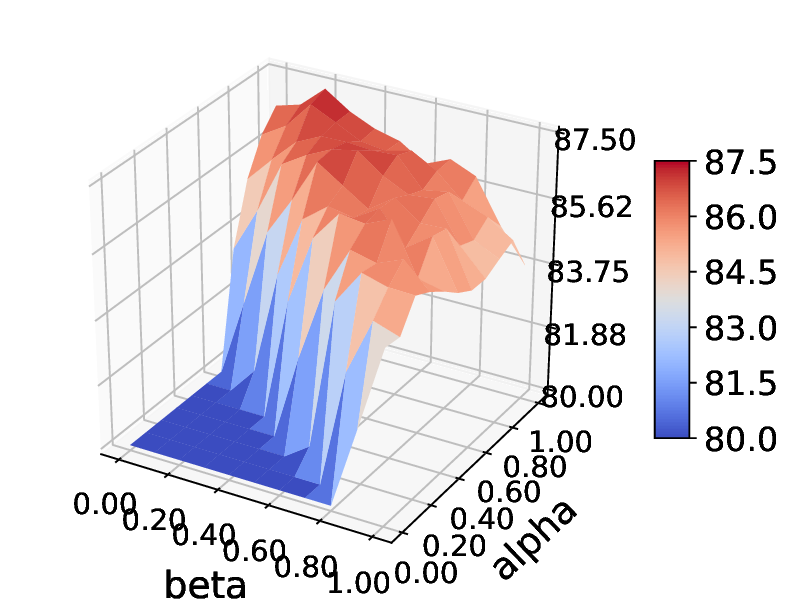}     
	}    
	\caption{ Sensitivity analysis of $\mu\in[0,1]$ for Ours and $(\alpha, \beta)$ on $[0, 1]^2$ for Ours-S.}
\end{figure}

\section{Conclusion}

In this paper, we present cross distillation, a novel knowledge distillation approach for learning compact student network given limited number of training instances. 
By reducing estimation errors between the student network and teacher network, cross distillation can bring a more powerful and generalizable student network.
Extensive experiments on benchmark datasets demonstrate the superiority of our method against various competitive baselines.

\paragraph{Acknowledgement}
This work is supported by the Research Grant Coucil of the Hong Kong Special Administrative Region, China (No.CUHK 14208815 and No.CUHK 14210717 of the General Research Fund). We sincerely thank Xin Dong, Jiajin Li, Jiaxing Wang and Shilin He for helpful discussions, as well as the anonymous reviewers for insightful suggestions.

\bibliographystyle{aaai}
\bibliography{bibfile}

\section{Appendix}
\label{appendix}

\section{Proof to Theorem~\ref{theorem}}
We decompose the proof to Theorem~\ref{theorem} into two parts. We first show the Lipchitz continuity for the softmax cross entropy function in Lemma~\ref{lemma1}, then we show the layer-wise propagation of estimation errors in a recursive way in Lemma~\ref{lemma2}.
Theorem~\ref{theorem} can be easily verified by combining Lemma~\ref{lemma1} and Lemma~\ref{lemma2}.

\begin{lemma}
\label{lemma1}
For the network logits $\m o = \mathcal{F}(\m x) \in R^d$ and labels $\m y \in R^d$, the softmax cross entropy $\mathcal{L}^{ce}(\m o; \m y)=-\sum_{i=1}^d y_i\log \frac{\exp(o_i)}{\sum_{j=1}^d \exp(o_j)}$ is $C$-Lipchitz continuous for some constant $C>0$.
\end{lemma}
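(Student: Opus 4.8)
The plan is to work directly in logit space: I would bound the gradient of $\mathcal{L}^{ce}(\cdot\,;\m y)$ uniformly over $R^d$ and then invoke the mean value inequality. First, writing $p_k(\m o) = \exp(o_k)/\sum_{j=1}^d \exp(o_j)$ for the $k$-th softmax output, a direct differentiation gives $\partial \mathcal{L}^{ce}/\partial o_k = \big(\sum_{i=1}^d y_i\big)\, p_k(\m o) - y_k$; since $\m y$ is a one-hot label vector (more generally, a probability vector with $\sum_i y_i = 1$), this simplifies to $\nabla_{\m o}\mathcal{L}^{ce}(\m o;\m y) = \m p(\m o) - \m y$, where $\m p(\m o)$ denotes the softmax vector.

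Next I would bound this gradient. Both $\m p(\m o)$ and $\m y$ are entrywise nonnegative and sum to one, so each has Euclidean norm at most its $\ell_1$ norm, namely $1$; hence $\|\nabla_{\m o}\mathcal{L}^{ce}(\m o;\m y)\|_2 \le \|\m p(\m o)\|_2 + \|\m y\|_2 \le 2$ for every $\m o \in R^d$. Because $R^d$ is convex and this bound is uniform, the mean value inequality yields
\[
|\mathcal{L}^{ce}(\m o^T;\m y) - \mathcal{L}^{ce}(\m o^S;\m y)| \le \Big(\sup_{\m o\in R^d}\|\nabla_{\m o}\mathcal{L}^{ce}(\m o;\m y)\|_2\Big)\,\|\m o^T - \m o^S\|_2 \le C\,\|\m o^T - \m o^S\|_2,
\]
with $C = 2$ (any $C \ge 2$ works, and one may absorb a norm-equivalence constant if a different norm on the logits is used downstream in Lemma~\ref{lemma2}).

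The only point requiring care — and really the conceptual content of the lemma — is that cross entropy is Lipschitz precisely because we treat it as a function of the logits rather than of the probabilities: as a function of $\m p$ the summand $-\log p_i$ has gradient blowing up as $p_i \to 0$, whereas composing with the softmax removes the offending $1/p_k$ factor, which is exactly the cancellation that makes $\partial\mathcal{L}^{ce}/\partial o_k = \big(\sum_i y_i\big) p_k(\m o) - y_k$ bounded no matter how extreme $\m o$ is. I would therefore spell out the derivative computation explicitly so that this cancellation is visible. No further obstacle is anticipated; the argument is short, and the only modeling assumption — that $\m y$ is a probability vector — holds for one-hot labels.
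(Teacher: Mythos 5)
Your proof is correct and rests on the same mechanism as the paper's: a uniform bound on the gradient of the loss in logit space combined with the mean value inequality (the paper applies this to the log--sum--exp term $\phi(\m o)=\log\sum_j\exp(o_j)$ after splitting off the linear part $-\sum_i y_i o_i$, whereas you differentiate the whole loss at once to get $\nabla_{\m o}\mathcal{L}^{ce}=\m p(\m o)-\m y$, which is marginally cleaner). The resulting constants differ trivially ($C=2$ versus $C=(C_0+1)$ up to norm equivalence), and both arguments are sound.
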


\begin{proof}
Note that 
\begin{equation}
\label{eq:cross_entropy}
\mathcal{L}^{ce}(\m o; \m y) = -\sum_{i=1}^d y_i o_i + \log{\sum_{j=1}^d \exp(o_j)},
\end{equation}
where we have used the fact $\sum_{i=1}^d y_i = 1$ since $\m y$ is a one-hot vector. The first term is linear in $\m o$ and therefore satisfies the Lipchitz continuity. We now turn to verify the Lipchitz continuity of the function $\phi (\m o) = \log \sum_{i=1}^d \exp(o_i)$.
According to the intermediate value theorem, for $\forall \m o^S, \m o^T \in R^d$, $\exists t\in [0, 1]$ such that for $\bar{\m o} = t \m o^T + (1-t) \m o^S$, we have
\begin{align}
\label{eq:logsumexp_lipchitz}
& |\phi(\m o^T) - \phi(\m o^S)|  \nonumber \\
& = |\nabla \phi(\bar{\m o})^{\top} (\m o^T - \m o^S)| \nonumber \\
& \leq \|\nabla \phi(\bar{\m o} )\|_1 \|\m o^T - \m o^S\|_{\infty} \nonumber \\
& =\|\m o^T - \m o^S \|_{\infty} \nonumber \\
& \leq C_0 \|\m o^T - \m o^S\|, 
\end{align}
where the third line comes from the Holder's inequality, the fourth line comes from the fact that $\nabla\phi(\bar{\m o} )$ is a softmax function lying on a simplex, and the last line is due to the equivalence among norms.
With Equation~\ref{eq:logsumexp_lipchitz}, one can easily verify that 
\begin{align}
    \label{eq:cross_entropy_lipchitz}
    & |\mathcal{L}^{ce}(\m o^T; \m y) - \mathcal{L}^{ce}(\m o^S; \m y)| \nonumber \\
    & = |\sum_{i=1}^d y_i (o_i^T - o_i^S) + \phi(\m o^T) - \phi(\m o^S)| \nonumber \\
    & \leq \|\m y\| \|\m o^T - \m o^S\| + |\phi(\m o^T) - \phi(\m o^S) | \nonumber \\
    & \leq (C_0 + 1) \|\m o^T - \m o^S \| \nonumber \\
    & = (C_0+1) \|\m W \m h_L^T - \m W \m h_L^S \| \nonumber \\
    & \leq C \|\m h_L^T - \m h_L^S\|
\end{align}
where we have used facts that $\|\m y\|=1$, $\m o^T = \m W \m h_L^T$, $\m o^S = \m W \m h_L^S$ with $\m W$ as shared parameters of the last layer, and $C = (C_0 + 1)\cdot \|\m W\|$. 

\end{proof}

\begin{lemma}
\label{lemma2}
Suppose both $\mathcal{F}^T$ and $\mathcal{F}^S$ are activated by the Lipchitz-continuous function $\sigma(\cdot) = \mathrm{ReLU}(\cdot)$, the estimation error $\mathcal{L}_L^r$ at layer $L$ can be bounded by the layerwise objective function $\mathcal{\tilde{L}}_l$ as follows:
\begin{equation}
    \label{eq:error_propagate}
    \mathcal{L}^r_L \leq \sum_{l=1}^{L-1}\prod_{k=l}^{L}C_k(\mu)\mathcal{\tilde{L}}_l +  \mathcal{\tilde{L}}_L,
\end{equation}
where $C_k(\mu)$ is some constant linear in $\mu$ in the $k$-th layer.
\end{lemma}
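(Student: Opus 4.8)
The plan is to prove Lemma~\ref{lemma2} by deriving a one-step recursion that controls the estimation error $\mathcal{L}^r_l$ at convolutional layer $l$ in terms of $\mathcal{L}^r_{l-1}$ and the layer-wise objective $\tilde{\mathcal{L}}_l$, and then unrolling it down to the input layer, where the base case $\mathcal{L}^r_0=0$ holds because $\mathcal{F}^T$ and $\mathcal{F}^S$ receive the same input $\m x$ (so $\m h^T_0=\m h^S_0$).

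For the one-step bound I would obtain two auxiliary recursions, one along each ``path'' used by cross distillation, and then combine them convexly. Along the correction path, insert $\sigma(\m W^S_l * \m h^T_{l-1})$ between $\sigma(\m W^T_l * \m h^T_{l-1})$ and $\sigma(\m W^S_l * \m h^S_{l-1})$ and apply $\|a+b\|_F^2 \le 2\|a\|_F^2 + 2\|b\|_F^2$: the first piece is exactly $\mathcal{L}^c_l$, while the second, $\|\sigma(\m W^S_l * \m h^T_{l-1}) - \sigma(\m W^S_l * \m h^S_{l-1})\|_F^2$, is at most $\|\m W^S_l\|^2\,\mathcal{L}^r_{l-1}$ by the $1$-Lipschitz continuity of $\mathrm{ReLU}$ together with the (finite) operator norm of the convolution; this gives $\mathcal{L}^r_l \le 2\mathcal{L}^c_l + 2\|\m W^S_l\|^2\,\mathcal{L}^r_{l-1}$. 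Symmetrically, inserting $\sigma(\m W^T_l * \m h^S_{l-1})$ along the imitation path gives $\mathcal{L}^r_l \le 2\mathcal{L}^i_l + 2\|\m W^T_l\|^2\,\mathcal{L}^r_{l-1}$. Taking the $\mu$-weighted convex combination of the two inequalities and using $\tilde{\mathcal{L}}_l = \mu\mathcal{L}^c_l + (1-\mu)\mathcal{L}^i_l$ collapses the loss terms into $2\tilde{\mathcal{L}}_l$ and gathers the carry-over term with coefficient $C_l(\mu) := 2\big(\mu\|\m W^S_l\|^2 + (1-\mu)\|\m W^T_l\|^2\big)$, which is affine --- hence ``linear'' --- in $\mu$. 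This yields the recursion $\mathcal{L}^r_l \le C_l(\mu)\,\mathcal{L}^r_{l-1} + 2\tilde{\mathcal{L}}_l$.

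Iterating this inequality from $l=L$ down to $l=1$ and using $\mathcal{L}^r_0=0$ telescopes into $\mathcal{L}^r_L \le 2\tilde{\mathcal{L}}_L + \sum_{l=1}^{L-1} 2\big(\prod_{k=l+1}^{L} C_k(\mu)\big)\tilde{\mathcal{L}}_l$, which is the asserted form once the absolute constants $2$ are absorbed into the $C_k(\mu)$ (still linear in $\mu$) and the product range is re-indexed; chaining this through Lemma~\ref{lemma1} on the shared softmax layer then produces Theorem~\ref{theorem}, whose bound is governed by the degree-$L$ polynomial $\prod_k C_k(\mu)$ in $\mu$.

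I expect the main obstacle to be arranging the argument so the carry-over coefficient is \emph{affine} in $\mu$: a plain triangle-inequality split of $\mathcal{L}^r_l$ would entangle $\mathcal{L}^c_l$ and $\mathcal{L}^i_l$ through $\mu$-independent constants and destroy the structure the theorem relies on, whereas deriving the two path-wise recursions separately and only then combining them convexly is exactly what makes $C_l(\mu)$ linear in $\mu$. A secondary, routine point is justifying $\|\sigma(\m W * \m a) - \sigma(\m W * \m b)\|_F \le \|\m W\|\,\|\m a - \m b\|_F$ via the operator norm of the convolution on finite-dimensional feature maps, and checking that folding batch normalization into the weights leaves these Lipschitz constants finite.
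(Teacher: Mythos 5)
Your proof is correct and follows essentially the same route as the paper's: a one-step recursion obtained by inserting the cross term along each of the correction and imitation paths, bounding the carry-over via the Lipschitz continuity of $\mathrm{ReLU}$ and the operator norm of the (im2col'd) convolution, convexly combining the two path-wise recursions so that $C_l(\mu)$ is affine in $\mu$, and unrolling with $\mathcal{L}^r_0=0$. The only difference is cosmetic: you keep the losses as the squared Frobenius norms they are defined to be (hence $\|a+b\|_F^2\le 2\|a\|_F^2+2\|b\|_F^2$ and extra factors of $2$ absorbed into $C_k(\mu)$, which leaves a constant larger than $1$ on the final $\tilde{\mathcal{L}}_L$ term), whereas the paper silently passes to unsquared norms and uses the plain triangle inequality to get the cleaner constants in the stated bound.
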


\begin{proof}
Recall that $\m h^T_L = \sigma(\m W^T_L * \m h^T_{L-1})$. To facilitate the following analysis, we apply the im2col operation to equivalently transform the convolution to matrix multiplication, i.e. $\bar{\m h}^T_L = \sigma(\bar{\m W}^T_L \bar{\m h}^T_{L-1})$, where $\bar{\m h}^T_L \in R^{c_o \times (N  c_i k k)}$ and $\bar{\m W}_L^T \in R^{c_o \times (c_i k k)}$ are matrices. Then
\begin{align}
    \label{eq:triangle_1}
    & \mathcal{L}^r_L = \|\m h_L^T - \m h_L^S\| = \|\bar{\m h}_L^T - \bar{\m h}_L^S\| \nonumber \\
    & = \|\sigma(\bar{\m W}_{L}^T \bar{\m h}_{L-1}^T) - \sigma(\bar{\m W}_{L}^S\bar{\m h}_{L-1}^S)\| \nonumber \\
    & \leq \|\bar{\m W}_{L}^T\bar{\m h}_{L-1}^T - \bar{\m W}_{L}^S\bar{\m h}_{L-1}^S \| \nonumber \\
    & = \|\bar{\m W}_{L}^T\bar{\m h}_{L-1}^T - \bar{\m W}^S \bar{\m h}_{L-1}^T +  \bar{\m W}^S \bar{\m h}_{L-1}^T - \bar{\m W}_{L}^S\bar{\m h}_{L-1}^S \| \nonumber \\
    & \leq \mathcal{L}^c_{L-1} + \epsilon^S_{L-1} \nonumber\\
    & \leq \mathcal{L}^c_{L-1} + \|\bar{\m W}_{L}^S \|\cdot \mathcal{L}^r_{L-1},
\end{align}
where the first inequality comes from the Lipchitz continuity of the $\mathrm{ReLU}(\cdot)$ function, and the rest can be readily obtained by applying the triangle inequality.
Similarly, we have 
\begin{equation}
    \label{eq:triangle_2}
    \mathcal{L}^r_L \leq \mathcal{L}^i_{L-1} + \|\bar{\m W}_{L}^T \|\cdot \mathcal{L}^r_{L-1}
\end{equation}
By taking the convex combination of Equation~\ref{eq:triangle_1} and Equation~\ref{eq:triangle_2} for some $\mu\in [0, 1]$, we have
\begin{align}
    \label{eq:recursive_bound}
    & \mathcal{L}^r_{L} \leq \mu (\mathcal{L}^i_{L-1} + \|\bar{\m W}_{L}^S \|\cdot \mathcal{L}^r_{L-1}) \nonumber \\ 
    & + (1-\mu)(\mathcal{L}^i_{L-1} + \|\bar{\m W}_{L}^T \|\cdot \mathcal{L}^r_{L-1}) \nonumber \\
    & \leq \mathcal{\tilde{L}}_{L} + C_L(\mu)\mathcal{L}^r_{L-1}  \nonumber \\
    & \leq \mathcal{\tilde{L}}_{L} + \sum_{l=1}^{L-1}\prod_{k=l}^L C_k(\mu)\mathcal{\tilde{L}}_l,
\end{align}
where we define $C_k(\mu) = \mu \|\bar{\m W}^S_k\| + (1-\mu) \|\bar{\m W}^T_k\|$, and the last line is obtained recursively with $\mathcal{L}^r_0=\|\m x - \m x\| = 0$ at the input of $\mathcal{F}^T$ and $\mathcal{F}^S$.
\end{proof}

Finally, by combining Equation~\ref{eq:cross_entropy_lipchitz} with Equation~\ref{eq:recursive_bound} together and define 
$C^{'}(k)=C\cdot C(k)$, Equation~\ref{eq:ce_upper_bound} in Theorem~\ref{theorem} can be readily verified.


\section{Cross Distillation with Quantization}
To arm cross distillation with network quantization, one can simply take $\mathcal{R}(\m W^S)=\Pi_Q(g(\m W^S))$ in Equation~\ref{eq:sparse_regression}, where $\mathcal{Q}=\{0, \frac{\pm 1}{2^{B-1}-1}, \frac{\pm 2}{2^{B-1}-1}, ..., \pm 1\}$ is the
collection of $2^B$ quantization points, $\Pi_Q$ denotes projection onto $Q$, and $g(\cdot)$ is some transformation function to normalize the input. Optimizing loss functions of cross distillation in Equation~\ref{eq:cvx} or Equation~\ref{eq:cross_loss} is similar to STE training~\cite{bengio2013ste}, where the proximal step performs lazy projection that corresponds to the quantization step in the forward pass in STE.

Similarly, if one take $\mathcal{R}(\m W^S)=\|g(\m W^S) - Q\|_F^2$, the entire procedure reduces to exactly ProxQuant~\cite{bai2018proxquant} which alternates between the proximal step and gradient descent step. 

\section{Additional Experiments}

\begin{table*}[t]
	\footnotesize
	\caption{The top-1 accuracy (\%) on structured pruning using ResNet-56 on CIFAR-10 with different training sizes. We choose Res-50\% as the pruning scheme, and the accuracy of the original model is 93.32\%.}
	\label{tab:resnet_cifar_K_shots}
	\centering
	\begin{tabular}{ccccccc}  
		\toprule
		Methods  &  1  &  2 & 3 & 5 & 10 & 50 \\
		\midrule
		L1-norm & $80.43_{\pm 0.00}$ & $80.43_{\pm 0.00}$ & $80.43_{\pm 0.00}$ & $14.36_{\pm 0.00}$ & $80.43_{\pm 0.00}$ & $80.43_{\pm 0.00}$\\
		BP & $84.17_{\pm 1.55}$ & $86.61_{\pm 1.69}$ & $86.86_{\pm 1.30}$ & $87.41_{\pm 0.98}$ & $87.79_{\pm 0.81}$ & $90.12_{\pm 0.70}$\\
		FSKD & $84.26_{\pm 1.42}$ & $85.79_{\pm 1.31}$ & $85.99_{\pm 1.29}$ & $87.53_{\pm 1.06}$& $88.15_{\pm 0.71}$ & $88.70_{\pm 0.55}$ \\
		FitNet & $86.85_{\pm 1.91}$ & $87.95_{\pm 2.13}$ & $88.94_{\pm 1.85}$ & $89.43_{\pm 1.60}$ & $\mathbf{91.03_{\pm 1.14}}$ & $\mathbf{91.89_{\pm 0.87}}$\\
		ThiNet & $88.40_{\pm 1.26}$ & $88.76_{\pm 1.18}$ & $88.95_{\pm 1.19}$ & $89.54_{\pm 0.84}$ & $90.36_{\pm 0.76}$ & $90.89_{\pm 0.49}$\\
		CP  & $88.53_{\pm 1.37}$ & $88.69_{\pm 1.09}$ & $88.79_{\pm 0.94}$ & $89.39_{\pm 0.80}$ & $89.91_{\pm 0.69}$ & $90.45_{\pm 0.43}$\\
		\midrule
		Ours-NC& $88.05_{\pm 1.61}$ & $88.63_{\pm 1.69}$ & $89.01_{\pm 1.53}$ & $89.51_{\pm 1.13}$ & $90.26_{\pm 0.83}$ & $90.98_{\pm 0.29}$\\
		Ours & $88.42_{\pm 1.63}$ & $89.12_{\pm 1.57}$ & $\mathbf{89.75_{\pm 1.50}}$ &  $89.93_{\pm 1.03}$ & $90.42_{\pm 0.86}$ & $90.85_{\pm 0.24}$ \\
		Ours-S & $\mathbf{89.00_{\pm 1.59}}$ & $\mathbf{89.45_{\pm 1.43}}$ & $89.56_{\pm 1.32}$ & $\mathbf{90.14_{\pm 1.19}}$ & $90.82_{\pm 0.79}$ & $91.24_{\pm 0.33}$  \\
		\bottomrule
	\end{tabular}
\end{table*}

\begin{table*}[t]
	\centering
	\footnotesize
	\setlength{\tabcolsep}{2 pt}{
	\begin{minipage}{0.47\textwidth}
	\caption{The accuracy (\%) of unstructured pruning with VGG-16 on CIFAR-10 with different pruning schemes. $K=5$ samples per class are adopted.}
	\label{tab:vgg_cifar_un_schemes}
	\begin{tabular}{ccccccc}  
		\toprule
		Methods  &  VGG-50\%  &  VGG-70\% & VGG-90\% & VGG-95\% \\
		\midrule
		L1-norm &  $92.47_{\pm 0.00}$ & $85.21_{\pm 0.00}$ & $15.06_{\pm 0.00}$ & $10.00_{\pm 0.00}$  \\
		BP & $93.49_{\pm 0.09}$ & $92.39_{\pm 0.17}$ & $71.38_{\pm 1.16}$ &  $42.16_{\pm 1.98}$ \\
		FitNet & $93.27_{\pm 0.15}$ & $92.51_{\pm 0.17}$ & $83.50_{\pm 1.73}$ & $63.48_{\pm 1.45}$ \\
		\midrule
		Ours-NC& $93.46_{\pm 0.06}$ & $93.17_{\pm 0.11}$ & $89.03_{\pm 0.82}$ & $75.20_{\pm 1.02}$ \\
		Ours & $93.51_{\pm 0.04}$ & $\mathbf{93.30_{\pm 0.09}}$ & $\mathbf{90.62_{\pm 0.93}}$ & $\mathbf{82.81_{\pm 1.26}}$ \\
		Ours-S & $\mathbf{93.58_{\pm 0.06}}$ & $\mathbf{93.27_{\pm 0.10}}$ & $90.36_{\pm 0.89}$  &  $81.97_{\pm 1.13}$ \\
		\bottomrule
	\end{tabular}
	\end{minipage}\hspace{4ex}
	\begin{minipage}{0.47\textwidth}
	\caption{The accuracy (\%) of unstructured pruning with ResNet-56 on CIFAR-10 with different pruning schemes. $K=5$ samples per class are adopted.}
	\label{tab:resnet_cifar_un_schemes}
    \begin{tabular}{ccccccc}  
		\toprule
		Methods  &  Res-50\%  &  Res-70\% & Res-90\% & Res-95\% \\
		\midrule
		L1-norm &  $92.32_{\pm 0.00}$ & $81.82_{\pm 0.00}$ & $10.00_{\pm 0.00}$ & $10.00_{\pm 0.00}$  \\
		BP & $93.45_{\pm 0.07}$ & $91.87_{\pm 0.23}$ & $74.74_{\pm 1.01}$ &  $44.26_{\pm 1.84}$ \\
		FitNet & $93.60_{\pm 0.10}$ & $92.91_{\pm 0.17}$ & $84.12_{\pm 1.17}$ & $\mathbf{67.53_{\pm 1.75}}$ \\
		\midrule
		Ours-NC& $93.36_{\pm 0.04}$ & $93.07_{\pm 0.11}$ & $83.39_{\pm 0.82}$ & $55.15_{\pm 1.89}$ \\
		Ours & $\mathbf{93.57_{\pm 0.07}}$ & $\mathbf{93.23_{\pm 0.14}}$ & $\mathbf{86.57}_{\pm 0.69}$ & $\mathbf{68.07_{\pm 1.73}}$ \\
		Ours-S & $\mathbf{93.58_{\pm 0.04}}$ & $\mathbf{93.24_{\pm 0.10}}$ & $86.38_{\pm 0.79}$  &  $64.55_{\pm 1.63}$ \\
		\bottomrule
	\end{tabular}
	\end{minipage}}
\end{table*}

\subsection{Implementation Details}
For the CIFAR-10 experiments, we adopt the implementation from torchvision\footnote{https://github.com/pytorch/vision/blob/master/torchvision/models/} and follow the standard way~\cite{simonyan2014very,he2016deep} in pretraining the model. For VGG-16 with BN and ResNet-34 on ImageNet-IlSVRC12, we adopt the checkpoint from the official release of torchvision\footnote{https://pytorch.org/docs/stable/torchvision/models.html}. 
Similar to~\cite{li2019knowledge}, we do not adopt data augmentation so as to better simulate the few shot setting. The combination of our methods with various data augmentation skills are discussed later.

In terms of baselines, we adopt the implementation\footnote{https://github.com/Eric-mingjie/rethinking-network-pruning/tree/master/imagenet/l1-norm-pruning} from~\cite{liu2018rethinking} for 1) L1-Norm.
Based on the pruned models by 1), we perform fine-tuning with back-propagation by minimizing the cross entropy or the FitNet loss, denoted as 2) BP and 3) FitNet respectively.
For 4) ThiNet, our implementation is based on the published code\footnote{https://github.com/Roll920/ThiNet}.
For 5) CP, we re-implement the paper based on its TensorFlow version\footnote{https://github.com/Tencent/PocketFlow} and reproduce the results in Table 1 of the paper. 
Note that we do not consider tricks such as the residual compensation and the 3C enhancement since they are not the main focus of this paper, despite that we found residual compensation can lead to slight improvement for both CP and our methods.
For both ThiNet and CP, we use all feature map patches for regression instead of sampling a subset of them, the latter of which lead to a significant drop of accuracy when only limited training instances are available.

We adopt the ADAM optimizer for these methods, and adjust the learning rate within [1e-5, 1e-3] to obtain proper performance. Each layer is optimized for 3,000 iterations, where the sparsity ratio linearly increases within the first 1,000 iterations. After layer-wise training, we further fine-tune the network for a few more epochs with back-propagation.

\subsection{Pruning Schemes}
The detailed pruning schemes of VGG-16 network for structured pruning are as follows: For VGG-A, 50\% channels are removed for all blocks except the 3-rd block. For VGG-B, 10\% more channels are pruned based on VGG-A, and 20\% channels are pruned in the 3-rd block. For VGG-C, we prune 75\% channels in the first four blocks, and 60\% channels in the last block.

The reduction of network parameters as well as computational FLOPs of VGG-16, ResNet-56 and ResNet-34 for structured pruning are presented in Table~\ref{tab:structure_flops_vgg_cifar}, Table~\ref{tab:structure_flops_resnet_cifar} and Table~\ref{tab:structure_flops_resnet_ilsvrc} respectively.

For unstructured pruning, the model size can be directly calculated by the sparsity $r$ in theory. However, in practice the irregular sparsity induced by unstructured pruning may not lead to speedup under prevalent computational frameworks. The gain of unstructured sparsity may rely on some specially designed hard-wares.

\begin{table}[h]
	\footnotesize
	\caption{Structured pruning schemes of VGG-16 on CIFAR-10.}
	\label{tab:structure_flops_vgg_cifar}
	\centering
	\begin{tabular}{ccccc}  
		\toprule
		Schemes &  Params (M)  & P$\downarrow(\%)$ & FLOPs (G) & F$\downarrow(\%)$ \\
		\midrule
		Orig. & 14.99 & - & 0.314 & - \\
		VGG-50\% & 4.53 & 69.78 & 0.082 & 73.95 \\
		VGG-A & 6.11 & 59.26 & 0.208 & 33.76 \\		
		VGG-B & 4.37 & 70.83 & 0.137 & 56.37 \\
		VGG-C & 2.92 & 80.55 & 0.061 & 80.45 \\
		\bottomrule
	\end{tabular}
\end{table}

\begin{table}[h]
	\footnotesize
	\caption{Structured pruning schemes of ResNet-56 on CIFAR-10.}
	\label{tab:structure_flops_resnet_cifar}
	\centering
	\begin{tabular}{ccccc}  
		\toprule
		Schemes &  Params (M)  & P$\downarrow(\%)$ & FLOPs (G) & F$\downarrow(\%)$ \\
		\midrule
		Orig. & 0.85 & - & 0.127 & - \\
		Res-50\%  & 0.50 & 41.18 & 0.072 & 43.31 \\
		Res-60\% &  0.42 & 50.59 & 0.059 & 53.51 \\		
		Res-70\% & 0.35 & 58.82 & 0.048 & 62.20 \\
		Res-90\% & 0.21 & 75.29 & 0.028 & 77.95 \\
		\bottomrule
	\end{tabular}
\end{table}

\begin{table}[h!]
	\footnotesize
	\caption{Structured pruning schemes of ResNet-34 on ILSVRC-12.}
	\label{tab:structure_flops_resnet_ilsvrc}
	\centering
	\begin{tabular}{ccccc}  
		\toprule
		Schemes &  Params (M)  & P$\downarrow(\%)$ & FLOPs (G) & F$\downarrow(\%)$ \\
		\midrule
		Orig. & 21.80 & - & 3.68 & - \\
		Res-30\% & 19.71 & 9.59 & 2.97 & 19.15 \\
		Res-50\% & 18.33 & 15.91 & 2.51 & 31.47 \\		
		Res-70\% & 16.92 & 22.37 & 2.05 & 44.26 \\
		Res-70\%+ & 12.79 & 41.32 & 1.85 & 49.78 \\
		\bottomrule
	\end{tabular}
\end{table}

\subsection{More Results}
\subsubsection{Structured Pruning}
We also conduct structured pruning with ResNet-56 on CIFAR-10.
Following a similar pattern in previous experiments, we first fix $K$ and evaluate cross distillation with various pruning schemes, and then show how cross distillation responds to different $K$ shots training samples.
The results are in Table~\ref{tab:resnet_cifar_K_shots} and Table~\ref{tab:resnet_cifar_schemes} respectively. 
Our methods again outperform the rest approaches, and fewer training instances or higher sparsities also enjoy to larger advantages.

\begin{table}
	\centering
	\footnotesize
	\setlength{\tabcolsep}{2 pt}{
	\caption{The top-1 accuracy (\%) of different structured pruning schemes with Resnet-56 on CIFAR-10. 5 samples per class are used.}
	\label{tab:resnet_cifar_schemes}
	\begin{tabular}{lllllll}  
		\toprule
		Methods  &  Res-50\%  &  Res-60\% & Res-70\% & Res-90\% \\
		\midrule
		L1-norm &  $80.43_{\pm 0.00}$ & $50.55_{\pm 0.00}$ & $30.50_{\pm 0.00}$ & $14.70_{\pm 0.00}$  \\
		BP & $87.41_{\pm 0.98}$ & $80.85_{\pm 1.23}$ & $72.23_{\pm 1.81}$ &  $32.03_{\pm 2.23}$ \\
		FSKD & $87.53_{\pm 1.06}$ & $82.50_{\pm 0.95}$ & $70.93_{\pm 1.57}$ & $30.31_{\pm 1.76}$ \\
		FitNet & $89.43_{\pm 1.60}$ & $86.03_{\pm 1.96}$ & $\mathbf{81.90_{\pm 2.01}}$ & $51.15_{\pm 2.60}$ \\
		ThiNet & $89.54_{\pm 0.84}$ & $85.73_{\pm 0.97}$ & $79.75_{\pm 1.34}$ & $38.64_{\pm 1.78}$\\
		CP  & $89.39_{\pm 0.80}$ & $86.01_{\pm 0.84}$ & $80.20_{\pm 1.26}$ & $\mathbf{52.17_{\pm 1.43}}$ & \\
		\midrule
		Ours-NC& $89.51_{\pm 1.13}$ & $85.56_{\pm 1.32}$ & $80.53_{\pm 1.45}$ & $50.98_{\pm 1.60}$ \\
		Ours & $89.93_{\pm 1.21}$ & $\mathbf{86.80_{\pm 1.12}}$ & $79.67_{\pm 1.56}$ & $50.67_{\pm 1.98}$ \\ 
		Ours-S & $\mathbf{90.14_{\pm 1.19}}$ & $86.58_{\pm 0.97}$ & $\mathbf{81.86_{\pm 1.43}}$  &  $\mathbf{52.05_{\pm 1.75}}$ \\
		\bottomrule
	\end{tabular}}
\end{table}

\subsubsection{Unstructured Pruning}
For unstructured pruning, we also conduct VGG-16 and ResNet-56 on the CIFAR-10 dataset.
With similar patterns to previous experiments, the results of VGG-16, ResNet-56 are shown in Table~\ref{tab:vgg_cifar_un_schemes} and Table~\ref{tab:vgg_cifar_un_K_shots}, Table~\ref{tab:resnet_cifar_un_schemes} and Table~\ref{tab:resnet_cifar_un_K_shot} respectively.

\subsubsection{Quantization} 
To demonstarte the effectiveness of cross distillation for network quantization, we take VGG-16 and ResNet-56 on CIFAR-10 for illustration. We choose the regularization $\mathcal{R}(\m W^S)=\Pi_Q(g(\m W^S))$ to be the projection function, and $g(\m x)=\frac{\m x-\min(\m x)}{\max(\m x)-\min(\m x)}$ as the linear normalization function. 
Before applying the transformation, we first truncate $\m W^S$ by the three-sigma rule so as to avoid outliers that may lead to poorly distributed quantization points.
For activation quantization, we adopt the widely used clipping method to bound activations between $[0, 1]$.
We vary the training size between 1-shot and 5-shot, and the results are shown in Table~\ref{tab:quantization}.

\subsection{Other Analysis}
\subsubsection{Cross Distillation with Data Augmentation}

As we consider the setting of few shot network compression, a natural question is how data augmentation can help to alleviate the shortage of training data.
Here we combine cross distillation with 1) randomly crop/rotate the input image~(Rand.), which is the widely used data augmentation technique; 2) Gaussian noise~(Gauss.) $\epsilon\sim \mathcal{N}(0, 0.2\max (\m h^S))$ over the hidden feature map $\m h^S$; and 3) Mixup~\cite{zhang2017mixup}~(Mixup), a pair-wise interpolation method over the training samples.
We compare to Mixup since the idea of cross distillation resembles Mixup in a way that both methods apply convex combinations but over different levels. Cross distillation applies convex combinations on the loss level (Ours) and feature map level (Ours-S) between the student and teacher network, while Mixup applies those on the pairs of training samples.
We adopt structured pruning on CIFAR-10 and use VGG-50\% as the pruning scheme. 

From Table~\ref{tab:vgg_cifar_augmentation}, we find that while 1) Random noise and 2) Gaussian noise give a slight improvement in the few shot setting, Mixup can significantly boost the performance, especially when $K$ is small. Moreover, Ours and Ours-S benefit more from Mixup comparing to Ours-NC, which shows that the convex combination on the loss level and feature map level can be better combined with the interpolation of input pairs for few shot training.


\subsubsection{Layers of Cross Connection}
Finally, in order to further investigate which layers benefit most from cross distillation, we conduct ablation studies on the positions of cross connections.
We take Ours-S for illustration and adopt VGG-50\% for structured pruning with $K=10$.
Note that cross connections are assigned at the end of VGG blocks, e.g., C2.2 denotes conv2.2 of the VGG network. 
The results are shown in Table~\ref{tab:cross_position}. 
It can be observed that the crossing points at deeper layers tend to bring more improvement, which is consistent with the findings in Figure~\ref{fig:regu_loss} and Figure~\ref{fig:estimation_inconsistency} that larger error gaps and lower validation losses occur in deeper layers of the network.

\begin{table*}
	\caption{The accuracy (\%) on unstructured pruning with VGG-16 on CIFAR-10 with different training sizes. We choose VGG-50\% as the pruning scheme, and the accuracy of the original model is 93.51\%.}
	\label{tab:vgg_cifar_un_K_shots}
	\centering
	\footnotesize
	\begin{tabular}{ccccccc}  
		\toprule
		Methods  &  1  &  2 & 3 & 5 & 10 & 50 \\
		\midrule
		L1-norm & $15.06_{\pm 0.00}$ & $15.06_{\pm 0.00}$ & $15.06_{\pm 0.00}$ & $15.06_{\pm 0.00}$ & $15.06_{\pm 0.00}$ & $15.06_{\pm 0.00}$\\
		BP & $47.20_{\pm 1.43}$ & $55.76_{\pm 1.29}$ & $65.71_{\pm 1.30}$ & $71.38_{\pm 1.16}$ & $79.95_{\pm 0.89}$ & $84.95_{\pm 0.63}$\\
		FitNet & $69.30_{\pm 1.98}$ & $78.30_{\pm 1.77}$ & $81.60_{\pm 1.85}$ & $83.50_{\pm 1.73}$ & $86.59_{\pm 1.40}$ & $88.43_{\pm 1.05}$\\
		\midrule
		Ours-NC& $78.71_{\pm 1.24}$ & $86.74_{\pm 1.31}$ & $88.00_{\pm 1.07}$ & $89.03_{\pm 0.82}$ & $90.26_{\pm 0.83}$ & $91.12_{\pm 0.51}$\\
		Ours & $\mathbf{83.32_{\pm 1.35}}$ & $88.26_{\pm 1.10}$ & $\mathbf{90.07_{\pm 0.82}}$ & $\mathbf{90.62_{\pm 0.90}}$ & $\mathbf{91.41_{\pm 0.68}}$ & $91.48_{\pm 0.40}$ \\
		Ours-S & $81.23_{\pm 1.29}$ & $\mathbf{88.43_{\pm 1.14}}$ & $89.54_{\pm 0.98}$ & $90.36_{\pm 0.89}$ & $91.09_{\pm 0.79}$ & $\mathbf{91.91_{\pm 0.37}}$  \\
		\bottomrule
	\end{tabular}
\end{table*}

\begin{table*}
	\caption{The accuracy (\%) on unstructured pruning with ResNet-56 on CIFAR-10 with different training sizes. The pruning scheme is Res-90\% and the original accuracy is 93.32\%.}
	\label{tab:resnet_cifar_un_K_shot}
	\centering
	\footnotesize
	\begin{tabular}{ccccccc}  
		\toprule
		Methods  &  1  &  2  &  3  &  5  &  10  &  50 \\
		\midrule
		L1-norm & $10.00_{\pm 0.00}$ & $10.00_{\pm 0.00}$ & $10.00_{\pm 0.00}$ & $10.00_{\pm 0.00}$ & $10.00_{\pm 0.00}$ & $10.00_{\pm 0.00}$\\
		BP & $56.69_{\pm 1.70}$ & $62.53_{\pm 1.63}$ & $67.46_{\pm 1.32}$ & $74.74_{\pm 1.01}$ & $76.85_{\pm 1.12}$ & $82.03_{\pm 0.71}$\\
		FitNet & $70.65_{\pm 1.28}$ & $79.46_{\pm 1.38}$ & $82.17_{\pm 1.15}$ & $84.12_{\pm 1.17}$ & $85.45_{\pm 0.88}$ & $87.23_{\pm 0.45}$\\
		\midrule
		Ours-NC& $72.71_{\pm 1.38}$ & $80.20_{\pm 0.97}$ & $82.55_{\pm 0.73}$ & $83.39_{\pm 0.82}$ & $86.26_{\pm 0.53}$ & $87.68_{\pm 0.39}$\\
		Ours & $75.73_{\pm 1.28}$ & $\mathbf{84.24_{\pm 1,03}}$ & $\mathbf{85.06_{\pm 0.81}}$ &  $\mathbf{86.57_{\pm 0.70}}$ & $87.03_{\pm 0.52}$ & $87.82_{\pm 0.28}$ \\
		Ours-S & $\mathbf{80.59_{\pm 1.15}}$ & $82.67_{\pm 1.25}$ & $84.43_{\pm 0.97}$ & $\mathbf{86.38_{\pm 0.79}}$ & $\mathbf{87.85_{\pm 0.60}}$ & $\mathbf{88.03_{\pm 0.31}}$  \\
		\bottomrule
	\end{tabular}
\end{table*}

\begin{table*}[]
	\centering
	\footnotesize
	\setlength{\tabcolsep}{2 pt}{
	\begin{minipage}{0.48\textwidth}
	\caption{The accuracy(\%) of quantization with VGG-16 and ResNet-56 on CIFAR-10. "WxAy" means x-bit weight quantization and y-bit activation quantization.}
    \label{tab:quantization}
    \begin{tabular}{c|c|c|c|c}
    \toprule
    \multirow{2}{*}{VGG-16} &
    \multicolumn{2}{c|}{K=1} &
    \multicolumn{2}{c}{K=5} \\
    \cline{2-5}
      & W2A3 & W2A4 & W2A3 & W2A4  \\
    \hline
    Ours-NC & $53.24_{\pm 2.31}$ & $88.93_{\pm 0.41}$ & $80.89_{\pm 0.98}$ & $91.23_{\pm 0.18}$ \\
    \hline
    Ours & $\mathbf{60.36_{\pm 2.86}}$ & $\mathbf{89.12_{\pm 0.38}}$ & $\mathbf{81.20_{\pm 1.26}}$ & $\mathbf{91.35_{\pm 0.29}}$ \\
    \hline\hline
    \multirow{2}{*}{ResNet-56} &
    \multicolumn{2}{c|}{K=1} &
    \multicolumn{2}{c}{K=5} \\
    \cline{2-5}
      & W2A32 & W4A32 & W2A32 & W4A32  \\
    \hline
    Ours-NC & $72.48_{\pm 1.94}$ & $85.75_{\pm 0.96}$ & $84.67_{\pm 1.89}$ & $91.09_{\pm 0.37}$ \\
    \hline
    Ours & $\mathbf{80.92_{\pm 2.23}}$ & $\mathbf{90.42_{\pm 0.53}}$ & $\mathbf{86.11_{\pm 1.97}}$ & $\mathbf{91.23_{\pm 0.45}}$ \\
    \bottomrule
    \end{tabular}
	\end{minipage}\hspace{4ex}
	\begin{minipage}{0.48\textwidth}
	\caption{The accuracy(\%) on quantization with ResNet-56 on CIFAR-10. "WxAy" means x-bit weight quantization and y-bit activation quantization.}
    \label{tab:vgg_cifar_augmentation}
    \begin{tabular}{cccccc}  
		\toprule
		Methods  &  1  &  2 & 5 & 10 \\
		\midrule
		Ours-NC& $65.57_{\pm 1.61} $ & $75.44_{\pm 1.69}$ & $81.20_{\pm 1.19}$ & $84.07_{\pm 0.83}$ \\
		Ours+Rand. & $64.60_{\pm 2.04}$   & $73.10_{\pm 1.85}$  &  $82.40_{\pm 1.50}$ & $84.31_{\pm 1.30}$  \\
		Ours+Gauss. & $62.38_{\pm 1.32}$ & $75.15_{\pm 1.64}$ & $81.58_{\pm 1.55}$ & $83.69_{\pm 1.18}$ \\
		Ours+Mixup & $68.40_{\pm 1.41}$ & $78.59_{\pm 1.35}$ & $81.05_{\pm 1.07}$ & $83.47_{\pm 0.98}$ \\
		\midrule
		Ours & $69.25_{\pm 1.39}$ & $80.65_{\pm 1.47}$ & $84.91_{\pm 0.98}$ & $86.61_{\pm 0.71}$ \\
		Ours+Rand. & $72.09_{\pm 1.66}$ & $81.39_{\pm 1.53}$  & $85.31_{\pm 1.13}$ & $86.34_{\pm 0.89}$ \\
		Ours+Gauss. & $73.10_{\pm 1.66}$ & $81.46_{\pm 1.40}$  & $85.07_{\pm 1.22}$ & $86.34_{\pm 1.07}$ \\
		Ours+Mixup & $\mathbf{79.97_{\pm 1.71}}$ & $\mathbf{84.37_{\pm 1.32}}$  & $86.01_{\pm 0.99}$ & $87.01_{\pm 0.81}$ \\		
		\midrule
		Ours-S & $68.53_{\pm 1.59}$ & $76.83_{\pm 1.43}$ & $82.74_{\pm 1.19}$ & $86.30_{\pm 0.79}$ \\
		Ours-S+Rand. & $69.43_{\pm 1.95}$ & $78.25_{\pm 1.79}$  & $84.71_{\pm 1.68}$ & $86.22_{\pm 1.46}$ \\
		Ours-S+Gauss. & $69.79_{\pm 1.98}$ & $77.74_{\pm 1.80}$  & $83.97_{\pm 1.34}$ & $86.00_{\pm 1.20}$ \\
		Ours-S+Mixup & $79.63_{\pm 1.45}$ & $83.63_{\pm 1.21}$  & $\mathbf{86.10_{\pm 1.07}}$ & $\mathbf{87.12_{\pm 0.76}}$ \\
		\bottomrule
	\end{tabular}
	\end{minipage}}
\end{table*}


\begin{table*}
	\caption{Soft cross distillation with structured pruning at different layers of VGG-16 on CIFAR-10. Double comb. denotes there are two crossing points and it holds similarly for triple comb.. As the baseline, Ours-S has accuracy of $86.30_{\pm 0.79}$. }
	\label{tab:cross_position}
	\centering
	\footnotesize
	\setlength{\tabcolsep}{2 pt}{
	\begin{tabular}{llllll}  
		\toprule
		Single Layer & C1.2  & C2.2  & C3.3 & C4.3 & C5.3 \\
		Accuracy & $83.98_{\pm 0.63}$ & $83.30_{\pm 65}$ & $83.78_{\pm 59}$ & $84.65_{\pm 70}$ & $\mathbf{84.97_{\pm 71}}$ \\
		\midrule
		Double Comb. & C1.2 + C2.2  & C2.2 + C3.3 & C3.3 + C4.3 & C4.3 + C5.3 &  \\
		Accuracy & $84.05_{\pm 0.72}$ & $83.91_{\pm 0.68}$ & $83.84_{\pm 0.67}$ & $\mathbf{86.12_{\pm 0.70}}$ &  \\
		\midrule
		Triple Comb. & C1.2 + C2.2 + C3.3  & C2.2 + C3.3 + C4.3 & C3.3 + C4.3 + C5.3 &  & \\
		Accuracy & $84.12_{\pm 0.59}$ & $84.27_{\pm 0.64}$ & $\mathbf{86.26_{\pm 0.63}}$ & &  \\
		\bottomrule
	\end{tabular}}
\end{table*}


\end{document}